%% file: main.tex
\documentclass{article}

\usepackage{iclr2027_conference,times}

\usepackage{fancyhdr}
\renewcommand{\headrulewidth}{0.4pt}

\usepackage[T1]{fontenc}
\usepackage{microtype}
\usepackage{amsmath,amssymb,amsthm}
\usepackage{fix-cm}
\usepackage{booktabs}
\usepackage{graphicx}
\usepackage{wrapfig}
\usepackage{float}
\usepackage[table]{xcolor}
\usepackage{tikz}
\usetikzlibrary{arrows.meta,calc}
\newcommand{\algmark}[1]{\tikz[remember picture,overlay]\coordinate(#1);}
\newcommand{\algstage}[4]{%
  \draw[line width=.8pt]
    ($(algR |- #1)+(3pt,1.55ex)$) -- ++(2.5pt,0) coordinate (bt)
    -- ($(algR |- #2)+(5.5pt,-.3ex)$) -- ++(-2.5pt,0);
  \draw[-{Latex[length=3.5pt,width=3pt]},line width=.8pt]
    ($(bt)!.5!($(algR |- #2)+(5.5pt,-.3ex)$)$) -- ++(9pt,0)
    node[right,inner sep=.8pt,circle,fill=black,text=white,font=\footnotesize] (n) {#3};
  \node[right,inner sep=1.5pt,align=left,font=\small] at (n.east) {#4};
}
\usepackage{algorithm}
\usepackage[noend]{algpseudocode}
\usepackage{listings}
\usepackage{tcolorbox}
\tcbuselibrary{skins,breakable,listings}
\usepackage{hyperref}
\hypersetup{
  colorlinks=true,
  citecolor=blue,
  linkcolor=blue,
  urlcolor=blue,
  linktoc=all
}
\usepackage{url}

\AddToHook{cmd/bibsection/after}{\addcontentsline{toc}{section}{\refname}}

\newcommand{\method}{\textsc{SAGE}}

\newcommand{\meanstd}[2]{\ensuremath{#1{\scriptstyle\,\pm\,#2}}}
\newcommand{\bestmeanstd}[2]{\ensuremath{\mathbf{#1}{\scriptstyle\,\boldsymbol{\pm}\,\mathbf{#2}}}}
\newcommand{\secondmeanstd}[2]{\ensuremath{\underline{#1{\scriptstyle\,\pm\,#2}}}}

\definecolor{qwenblue}{HTML}{0072B2}
\definecolor{ministralorange}{HTML}{E69F00}
\definecolor{phipurple}{HTML}{CC79A7}
\definecolor{sagegreen}{HTML}{009E73}
\definecolor{cotvermillion}{HTML}{D55E00}

\newtheorem{theorem}{Theorem}[section]

\newtheorem{lemma}[theorem]{Lemma}

\title{Self-Adapting Group of Experts for Multi-Agent Reasoning}

\author{
Mohammad Atif Quamar$^1$ \enspace Nurbek Tastan$^1$ \enspace Karthik Nandakumar$^{1,2}$ \enspace Junpei Komiyama$^{1,3}$ \\[1mm]
$^1$Mohamed bin Zayed University of Artificial Intelligence \enspace $^2$Michigan State University \enspace $^3$RIKEN AIP \\[1mm]
{\small\texttt{\{mohammad.atif,nurbek.tastan,karthik.nandakumar,junpei.komiyama\}@mbzuai.ac.ae}} \\[2mm]
\textbf{Project Page:} \url{https://www.atifquamar.com/sage-page}
}

\iclrfinalcopy

\begin{document}

\fancyhead{}              %
\fancyhead[L]{Preprint}   %

\raggedbottom
\setcounter{topnumber}{1}
\setcounter{bottomnumber}{1}
\setcounter{totalnumber}{2}
\renewcommand{\topfraction}{0.85}
\renewcommand{\bottomfraction}{0.70}
\renewcommand{\textfraction}{0.10}
\renewcommand{\floatpagefraction}{0.75}
\makeatletter
\setlength{\@fptop}{0pt}
\makeatother

\maketitle
\lhead{Preprint}

\begin{abstract}

Multi-agent systems bring together language model agents with different roles to propose, review, and refine solutions. Each agent's response depends on its model's capabilities, the reasoning strategy defined by its system prompt, and the information in its input context. Existing frameworks often adapt communication by changing this context while leaving individual prompts fixed, even when a problem calls for different skills. We study whether agents' initial responses can identify a strategy better suited to the current problem and guide its transfer to other agents. To address this, we introduce \method{} (\textbf{S}elf-\textbf{A}dapting \textbf{G}roup of \textbf{E}xperts), a training-free framework that uses answer agreement, prefix consistency, and reciprocal peer review to select a strategy donor. \method{} transfers the selected donor's reasoning strategy to the other agents while preserving their original roles. This transfer uses only the agents' original system prompts, without access to the problem or generated solutions. After strategy adaptation, agents exchange responses through a dynamic, sparse directed acyclic graph that routes information from higher-scoring agents to lower-scoring agents. Experiments across multiple agent backbones and reasoning benchmarks show that \method{} achieves higher average accuracy than the evaluated baselines. Our code is available \href{https://github.com/atifquamar07/sage}{here}.
\end{abstract}

\input{figures/sage_method_figure}

\section{Introduction}
\label{sec:introduction}

Large language models (LLMs) have made substantial progress in question answering, coding, and mathematical reasoning~\citep{openai2024gpt4technicalreport,grattafiori2024llama3herdmodels,qwen2025qwen25}. Yet a single response can still contain errors or fail when a problem requires several reasoning steps or knowledge beyond the model's strengths~\citep{wang2023selfconsistency,hendrycks2021mmlu}. Smaller models can excel at particular tasks while struggling with these broader demands~\citep{magister2023teaching}. These limitations motivate multi-agent systems (MAS), where several LLM-based agents discuss solutions, check one another's work, and revise their answers~\citep{li2023camel,chen2024agentverse,du2024debate}. By combining different models or roles, such systems can draw on varied expertise and solve problems that individual agents miss~\citep{chen2024reconcile,yue2025masrouter}. The challenge is to match the team's reasoning strategies and collaboration to the needs of each problem.

Recent work seeks useful, efficient communication, since more discussion does not always improve answers~\citep{smit2024mad,li2024sparsedebate}. To shape communication, GPTSwarm optimizes links and prompts, AgentPrune removes unnecessary links~\citep{zhuge2024gptswarm,zhang2025agentprune}, and G-Designer generates a graph for each problem using a learned model~\citep{zhang2025gdesigner}. 
SelfOrg rebuilds the graph each round, favoring agents whose response embeddings align with the group average (centroid)~\citep{tastan2026selforg}.
Beyond connections, MOC preserves messages from directly and indirectly connected agents without repetition~\citep{guan2026moc}, while MAD-M$^2$ filters potentially incorrect history without adapting role instructions~\citep{tian2026madmm}.

An agent's response depends on three components:
(i) its underlying model, which determines its learned capabilities;
(ii) its role-defining system instructions, which guide how it
approaches a problem; and
(iii) the problem-specific input context, including the problem statement
and any previous or peer responses.
Changing the communication structure changes which peer responses
an agent receives, but does not by itself revise its role instructions.
Although prior work also explores role and prompt adaptation,
we investigate how agents' responses to the current problem can
guide the transfer of useful reasoning instructions across roles.
This fully utilizes the capability of the agents by allowing them to adapt their reasoning strategies. 

Based on the aforementioned idea, we introduce a reference agent, which we call a \textit{strategy donor}, whose original role instructions may benefit other agents. To see how this conceptually works, consider the following physics problem: 
\begin{quote}
An object of mass $m$ is dropped from rest from a height $h$.
Air resistance opposes its motion with magnitude $kv^2$,
where $k>0$ is a constant and $v$ is its speed.
What is its speed just before it hits the ground?
\end{quote}
Suppose the agents are assigned the roles of a physicist,
a mathematician, and a computer scientist. 
The physicist's original role prompt might instruct it to identify
the relevant forces and formulate governing equations from
physical laws (Newtonian dynamics), which clarifies the underlying mechanics of the problem as a function of $m,h,k$, and $v$.
If a physicist is selected as the \textit{donor}, these general instructions can guide the adaptation of the other agents' role prompts. Given the physicist's expertise in formulating the governing equations, the other agents can focus on their specialized approaches.
The mathematician could incorporate this guidance while retaining
its expertise on analytical derivation, and the computer scientist
could focus on numerical solutions for the simulated dynamics.
Such collaboration further benefits from multi-turn interactions, in which agents iteratively refine their responses based on feedback from their peers.

We introduce \method{} (\textbf{S}elf-\textbf{A}dapting \textbf{G}roup of \textbf{E}xperts), a training-free framework that (i) selects a strategy donor from the agents' independent initial responses using answer agreement, prefix consistency, and reciprocal peer review; (ii) uses the donor's original system prompt to revise the other agents' strategies while preserving their inherent roles; (iii) constructs a sparse directed acyclic graph, rebuilt after every collaboration round, that routes responses from higher-scoring agents to lower-scoring agents. This creates a self-adaptive multi-agent system that adjusts how each individual agent reasons, adapts, and collaborates for each problem without relying on any training or external judge model.

\section{Methodology}
\label{sec:methodology}

\method{} adapts the reasoning strategies of the agents and their collaboration structure for each problem in three stages. First, a strategy donor is selected among the agents based on their independent initial responses, which are evaluated using answer agreement, prefix consistency, and reciprocal peer review (Section~\ref{sec:prompt_donor}). Second, the donor's original system prompt is used to adapt the other agents' strategies while preserving their roles (Section~\ref{sec:prompt_adaptation}). Finally, the agents propagate their responses (based on their updated system prompts) via a sparse directed acyclic communication graph and their responses are refined over multiple collaboration rounds (Section~\ref{sec:collaboration}). While prompt/strategy adaptation is performed once per problem, the communication graph is repeatedly updated between collaboration rounds. The procedure does not require any training or an external judge model. Figure~\ref{fig:sage-method-overview} illustrates the workflow, and Algorithm~\ref{alg:sage_overview} summarizes the procedure.

\paragraph{Setup.} For a given problem $x$, let $y = \mathcal{M}(x; s, c)$ be the \textit{response} of an agent, where $\mathcal{M}$ represents the agent's language model (backbone), $s$ denotes its system prompt that specifies its role and reasoning strategy (a reusable reasoning specialization such as algebra, physics, or psychology), and $c$ is the optional current context (e.g., response prefix or previous and peer responses). An agent's system prompt $s$ can change its instructions, not its model parameters. Within an agent's response $y$, we distinguish the \textit{reasoning} from the \emph{answer} $a$ (final solution to problem $x$) extracted from it. Let $\kappa(y)$ be a function that extracts and normalizes the answer $a$ from a response $y$, returning $\emptyset$ when extraction fails. 

We consider a group of $N>1$ agents, indexed by $[N] = \{1, \ldots, N\}$.  Agent $i$ uses a language model $\mathcal{M}_i$ and an initial system prompt $s_i$, $i \in [N]$. For each problem $x$, the system prompts of each agent are sampled without replacement from a shared pool. The same agents and models are used throughout inference, and no model parameters are updated. Let $y_i^t = \mathcal{M}_i(x; s_i, c_i^t)$ be the response of agent $i$ at time step $t$, where $t=0$ denotes initialization and $t \geq 1$ denotes a completed collaboration round. The context is omitted during initial generation, i.e., $y_i^0 = \mathcal{M}_i(x; s_i)$. For $t\geq1$, $s_i$ is replaced by the adapted prompt $\tilde{s}_i$ (Section~\ref{sec:prompt_adaptation}). Let $\mathcal{Y}^t=\{y_1^t, \ldots, y_N^t\}$ denote the multiset of all agent responses at step $t$.

\subsection{Selecting a Strategy Donor}
\label{sec:scoring}
\label{sec:prompt_donor}

We use the initial responses of the agents to select a \emph{strategy donor}: the agent whose original system prompt will guide the adaptation of the prompts of the other agents. Response-based donor selection makes prompt adaptation specific to the current problem.
In the mechanical physics example, this stage can select the physicist as donor when its response receives the highest score based on answer agreement, prefix consistency, and reciprocal peer review. 

\paragraph{Answer agreement.} Answer agreement measures how often the agents reach the same final answer, following the voting principle of self-consistency~\citep{wang2023selfconsistency}. Let $\mathcal{B}$ be the nonempty multiset of responses being compared, where repeated responses are counted separately. Let $\mathbb{I}$ be the indicator function. The agreement of a response $y$ with a set of responses $\mathcal{B}$ is defined as:

\begin{equation}
    q(y,\mathcal{B}) = \frac{1}{|\mathcal{B}|}\sum_{b\in\mathcal{B}}
\mathbb{I}[\kappa(b)=\kappa(y)\neq\emptyset].
\end{equation}

\paragraph{Prefix consistency.} Prefix consistency checks whether an agent reaches the same answer when asked to complete the beginning of its own response~\citep{iwase2026reliablechainofthoughtprefixconsistency}, which indicates the agent consistently reproduces its answer from partial reasoning. For a response $y$ produced by an agent, let $\pi_{\tau}(y)$ retain its initial fraction $\tau \in (0,1)$ of the response. Let $\bar y = \mathcal{M}(x; s,\pi_\tau(y))$ denote the completion generated by the same agent under the same prompt $s$ with the partial response as the input context. The prefix consistency is defined as:

\begin{equation}
    z_{\tau}(y) = \mathbb{I}[\kappa(\bar y)=\kappa(y)\neq\emptyset].
\end{equation}

The above two signals measure support across responses and reproducibility within an agent. The combined score of an agent is defined as:

\begin{equation}
    \rho(y,\mathcal{B}) =  q(y,\mathcal{B}) + \lambda z_{\tau}(y),
    \label{eq:reliability}
\end{equation}

\noindent where $\lambda\geq0$ controls the weight of prefix consistency. The first term measures support; the second records whether prefix completion recovers the same answer, which implies the answer is stable under partial re-generation.

In our multi-agent system, each agent first solves the problem independently. We score these initial responses using answer agreement and prefix consistency, then use reciprocal peer review to select the donor. Based on the initial responses of all agents $\mathcal{Y}^0$, each agent $i$ receives a score $\rho_i^0 = \rho(y^0_i,\mathcal{Y}^0)$. Prefix outcomes are sampled once per agent--response pair within a fixed-prompt phase and reused when agreement is recomputed. In summary, we score each response using agreement and stability.

\begin{algorithm}[tp]
\caption{\method{}}
\label{alg:sage_overview}
\small\hypersetup{linkcolor=black}%
\begin{minipage}[t]{0.8\linewidth}
\begin{algorithmic}[1]
\Require Problem $x$; agents $(\mathcal{M}_i,s_i)_{i=1}^N$; budgets $m,K,T$
\Ensure Final answer
\State \label{line:answer}\algmark{a1}Initial response: $y_i^0\gets\mathcal{M}_i(x;s_i)$, $\forall i\in[N]$
\State Score: $\rho_i^0\gets\rho(y_i^0,\mathcal{Y}^0)$ \Comment{Eq.~\eqref{eq:reliability}}
\State \label{line:review}Peer review: $\mathcal{R}\gets\{y_{i\leftarrow j},y_{j\leftarrow i}\}_{(i,j)\in H_m\times L_m}$
\State Retain responses with best reviews: $\mathcal{R}^\dagger\gets\{y_i^\dagger\}$ \Comment{Eq.~\eqref{eq:owner_candidate_selection}}
\State \label{line:donor}\algmark{a5}Select donor: $\ell\gets\arg\max_i\rho(y_i^\dagger,\mathcal{R}^\dagger)$ \Comment{Eq.~\eqref{eq:donor_selection}}
\State \label{line:rewrite}\algmark{a6}Rewrite system prompts: $\tilde{s}_i\gets R_{\mathcal{M}_i}(s_i,s_\ell)$, $\forall i\neq\ell$ \Comment{Eq.~\eqref{eq:prompt_rewrite}}
\For{\algmark{a7}$t=1$ to $T$} \label{line:round}
    \State Build DAG: $P_i^t$, $\forall i\in[N]$ \Comment{Eq.~\eqref{eq:parents}}
    \For{$i$ in decreasing $\rho_i^{t-1}$}
        \State Revise: $y_i^t\gets\mathcal{M}_i(x;\tilde{s}_i,(y_i^{t-1},\mathbf{y}_i^t))$ \Comment{Eq.~\eqref{eq:response_update}}
    \EndFor
    \State Rescore: $\rho_i^t\gets\rho(y_i^t,\mathcal{Y}^t)$
    \State \label{line:stop}\algmark{a12}\textbf{break} if all answers agree
\EndFor
\State \label{line:pool}\algmark{a13}Pool: $\mathcal{P}\gets(\mathcal{Y}^0,\mathcal{R}^\dagger,\mathcal{Y}^1,\ldots,\mathcal{Y}^T)$
\State \label{line:vote}\algmark{a14}\Return $\arg\max_a W(a)$ \Comment{Eq.~\eqref{eq:output}}
\end{algorithmic}
\end{minipage}\algmark{algR}%
\begin{tikzpicture}[remember picture,overlay]
\algstage{a1}{a5}{1}{Select\\donor}
\algstage{a6}{a6}{2}{Transfer\\strategy}
\algstage{a7}{a12}{3}{Collaborate}
\algstage{a13}{a14}{4}{Pool \&\\vote}
\end{tikzpicture}
\end{algorithm}

\paragraph{Reciprocal peer review and donor selection.}
We split the agents into two groups by their initial scores $\rho_i^0$. The higher-scoring group contains the agents with the highest score; if only one agent has it, the next-highest-scoring agent is added, with lower indices breaking ties. The lower-scoring group contains all remaining agents. If all scores are equal, two agents are chosen at random to form the higher-scoring group. We then sample up to $m$ agents uniformly at random from each group, giving $H_m$ and $L_m$. For each pair $(i,j)\in H_m\times L_m$, each agent reads both original responses and uses its own backbone, under a shared critic instruction, to keep or revise its response. This produces two complete candidate responses, $y_{i\leftarrow j}$ from agent $i$ and $y_{j\leftarrow i}$ from agent $j$. Let $\mathcal{R}$ be the multiset of all review candidates, $o(y)$ be the \textit{owner} agent that produced $y$, and $S=H_m\cup L_m$ be the reviewed agents. We score each candidate response using Eq.~\eqref{eq:reliability}, measuring agreement across all of $\mathcal{R}$ and checking prefix consistency under its owner's original prompt. 

We retain each agent's highest-scoring candidate response:
\begin{equation}
y_i^\dagger
=\underset{y\in\mathcal{R}:o(y)=i}{\arg\max}
\rho(y,\mathcal{R}),
\qquad i\in S.
\label{eq:owner_candidate_selection}
\end{equation}

We then recompute agreement using only the multiset of retained candidate responses, $\mathcal{R}^\dagger=\{y_i^\dagger:i\in S\}$, reuse their prefix results, and select the highest-scoring agent as donor:
\begin{equation}
\ell
=\underset{i\in S}{\arg\max}\;
\rho(y_i^\dagger,\mathcal{R}^\dagger).
\label{eq:donor_selection}
\end{equation}
Thus, each reviewed agent contributes one candidate to donor selection. Ties are resolved deterministically. The donor supplies its original system prompt $s_\ell$. While collaboration begins from the original initial responses $\mathcal{Y}^0$, only the retained reviews $\mathcal{R}^\dagger$ contribute to final answer selection.

\subsection{Adapting Strategies While Preserving Roles}
\label{sec:prompt_adaptation}

In the mechanical physics example, this stage lets the mathematician and computer scientist adopt the physicist's guidance to identify forces and formulate governing equations while retaining their analytical and numerical approaches, respectively. We consider this stage to be one of our major contributions that is not often included in other multi-agent collaboration frameworks.

For each agent $i\neq\ell$, we use its backbone $\mathcal{M}_i$ to rewrite its original prompt $s_i$ using guidance from the donor's original prompt $s_\ell$:
\begin{equation}
\tilde{s}_i
= R_{\mathcal{M}_i}\!\left(s_i,s_\ell\right),
\qquad i\neq\ell,
\label{eq:prompt_rewrite}
\end{equation}
where $R_{\mathcal{M}_i}$ follows a fixed instruction to add useful reasoning guidance from the donor while keeping the target agent's role. The rewriter receives only the two original prompts, without the problem, images, or generated responses. Each agent's prompt is rewritten independently. Restricting the rewrite to the original system prompts transfers reusable reasoning guidance rather than solution content. Preserving each target agent's role maintains diversity within the team. The donor keeps its original prompt, i.e., $\tilde{s}_\ell = s_\ell$. Strategy adaptation happens only once per problem, and all prompts stay fixed during subsequent collaboration. Lines~\ref{line:answer}--\ref{line:rewrite} of Algorithm~\ref{alg:sage_overview} summarize the donor selection and strategy adaptation stages.

\subsection{Propagating Reasoning Guidance through a Sparse DAG}
\label{sec:collaboration}

Once the strategies are adapted, agents propagate reasoning guidance through their responses along a sparse directed acyclic graph (DAG), which can change between rounds. This collaboration forms the inner loop of Algorithm~\ref{alg:sage_overview}. In the mechanical physics example, this stage lets a lower-scoring agent use a higher-scoring peer's force balance or derivation to refine its own calculation of the speed of the object. For example, the computer scientist might adopt the physicist's reasoning to find potential errors of the numerical solution. Round $1$ starts from the initial responses $\mathcal{Y}^0$ and their scores $\boldsymbol{\rho}^0$, where $\boldsymbol{\rho}^t=(\rho_i^t)_{i=1}^N$. The kept peer-review responses $\mathcal{R}^\dagger$ are used only to select the donor and in the final vote (Section~\ref{sec:pool-vote}); they are not given to the agents as starting responses or as context during collaboration.

\paragraph{Constructing the score-directed DAG.}
Before round $t$, each agent selects at most $K$ strictly higher-scoring parents according to:
\begin{equation}
P_i^t
=\operatorname{TopK}_K\{j\in[N]\setminus\{i\}:\rho_j^{t-1}>\rho_i^{t-1}\},
\qquad K\in\{0,\ldots,N-1\}.
\label{eq:parents}
\end{equation}
Here $\operatorname{TopK}_K$ ranks candidates by decreasing score, then increasing index. The graph $E^t=\{(j,i):j\in P_i^t\}$ routes feedback along a strict score decrease. Equal scores create no edge. This rule bounds both the number of messages and the depth of their dependencies.

\begin{lemma}[Sparse routing with bounded depth]
\label{lem:acyclic-routing}
For every round $t$, $E^t$ is acyclic, each agent has in-degree at most $K$, and
\begin{equation}
|E^t|\leq KN-\frac{K(K+1)}{2}.
\label{eq:routing-edge-bound}
\end{equation}
Every directed path contains at most $K$ edges.
\end{lemma}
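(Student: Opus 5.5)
The plan is to derive all four claims from one fact: every edge $(j,i)\in E^t$ satisfies $\rho_j^{t-1}>\rho_i^{t-1}$. This holds by the definition of $P_i^t$ in Eq.~\eqref{eq:parents}, since only strictly higher-scoring agents are candidates.

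\emph{Acyclicity and in-degree.} Suppose $i_0\to i_1\to\cdots\to i_r=i_0$ were a directed cycle in $E^t$. The scores would strictly decrease along it, giving $\rho_{i_0}^{t-1}>\rho_{i_0}^{t-1}$, which is a contradiction. The in-degree of $i$ equals $|P_i^t|$, and $|P_i^t|\le K$ because $\operatorname{TopK}_K$ returns at most $K$ elements.

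\emph{Edge count.} Order the agents by decreasing $\rho^{t-1}$, breaking ties by increasing index, and let agent $i$ occupy position $p(i)\in[N]$. Every agent scoring strictly higher than $i$ appears before position $p(i)$, so there are at most $p(i)-1$ of them. Hence $|P_i^t|\le\min\{K,p(i)-1\}$. Summing over positions gives
\begin{equation*}
|E^t|\le\sum_{p=1}^{N}\min\{K,p-1\}=\sum_{p=1}^{K}(p-1)+(N-K)K=\frac{K(K-1)}{2}+NK-K^2 ,
\end{equation*}
which equals $KN-\frac{K(K+1)}{2}$, as in Eq.~\eqref{eq:routing-edge-bound}.

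\emph{Path length.} This is the only step that uses the TopK selection rather than mere strictness, so I expect it to need the most care. Take a directed path $j_0\to j_1\to\cdots\to j_r$ with $r\ge 1$. Then $j_{r-1}\in P_{j_r}^t$, so $j_{r-1}$ is among the first $K$ candidates in the ranking of $\{j:\rho_j^{t-1}>\rho_{j_r}^{t-1}\}$ used by $\operatorname{TopK}_K$. By strict decrease along the path, each of $j_0,\dots,j_{r-2}$ has score strictly greater than $\rho_{j_{r-1}}^{t-1}>\rho_{j_r}^{t-1}$. So these $r-1$ distinct agents, distinct because the graph is acyclic, are candidates for $j_r$. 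Because their scores are strictly larger than that of $j_{r-1}$, they are ranked strictly ahead of $j_{r-1}$ regardless of the index tie-break. Therefore $j_{r-1}$ has rank at least $r$ in that ranking, and membership in the top $K$ forces $r\le K$. For $r=0$ the bound holds trivially.
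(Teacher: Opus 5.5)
Your proof is correct. For acyclicity, the in-degree bound, and the edge count it coincides with the paper's argument: scores strictly decrease along edges, an agent at position $r$ of the decreasing-score order has at most $r-1$ strictly higher-scoring agents, and you sum $\min\{K,r-1\}$.

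The path-length step is where you take a different route. The paper argues globally. For any target, the strictly higher-scoring candidates form a prefix of the fixed order (decreasing score, then increasing index), and $\operatorname{TopK}_K$ ranks by the same key. So every selected parent lies among the first $K$ ranks $v_1,\dots,v_K$, and hence every vertex with an outgoing edge lies in this set. A path with $d$ edges has $d$ distinct non-terminal vertices there, giving $d\le K$. You argue locally from the last edge $j_{r-1}\to j_r$. The earlier vertices $j_0,\dots,j_{r-2}$ are themselves candidates for $j_r$ that outrank $j_{r-1}$, so $j_{r-1}$ sits at position at least $r$ in $j_r$'s candidate ranking, and top-$K$ membership forces $r\le K$. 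Their distinctness already follows from the strict score decrease, so you need not invoke acyclicity.

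Your route is more self-contained. It needs only monotonicity along the path and the definition of $\operatorname{TopK}_K$ for a single target, not the fact that every target's candidate list is a prefix of one common order. The paper's route spends that observation but gets a stronger by-product. In each round all senders are confined to the top $K$ agents of the global order, so at most $K$ distinct agents send messages, and the path bound follows as a corollary of that sender bound.
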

The proof is in Appendix~\ref{app:proofs}. For $N=4$ and $K=2$, a round therefore has at most five directed messages and a dependency path of at most two edges. The guarantee applies to each round separately, as the ranking can change after revision.

The score-directed sparse graph prioritizes stronger responses while bounding communication and dependency depth, and rebuilding the graph after each round allows information flow to track changes in response quality. Thus, one-time strategy adaptation changes how agents reason, whereas round-wise routing changes whose evidence they use.

\paragraph{Refining responses along the DAG.}
Agents communicate in decreasing previous-stage score order, with lower indices breaking ties. This is a topological order of $E^t$. Let $i^{\star}_{t-1}=\arg\max_i\rho_i^{t-1}$ denote its first agent and let $\mathbf{y}_i^t=((j,y_j^t):j\in P_i^t)$ contain the labeled, already-updated parent responses. The update responses are obtained as:
\begin{equation}
y_i^t\leftarrow
\begin{cases}
\mathcal{M}_i\!\left(x;\tilde{s}_i,(y_i^{t-1},\mathbf{y}_i^t)\right),
& P_i^t\neq\emptyset,\\[2pt]
\mathcal{M}_i\!\left(x;\tilde{s}_i,y_i^{t-1}\right),
& i=i^{\star}_{t-1},\\[2pt]
y_i^{t-1},&\text{otherwise}.
\end{cases}
\label{eq:response_update}
\end{equation}
Thus, the routing leader self-revises, children use their parents' current-round answers, and other source nodes retain their previous answers. After the sweep, we evaluate $\rho_i^t=\rho(y_i^t,\mathcal{Y}^t)$ under the adapted prompts. The graph is rebuilt for the next round unless the round limit $T$ is reached or all normalized answers agree on a nonempty key.

\subsection{Selecting the final answer}
\label{sec:pool-vote}
To retain candidates from earlier reasoning stages, we pool the initial answers, one retained review per sampled owner, and every completed round's responses. The pool $\mathcal{P}$ is a multiset of occurrences $(t,i,y)$ identifying the stage, owner, and response, where $t\in\{0,\mathrm{rev},1,\ldots,T\}$ and $t=\mathrm{rev}$ marks the retained reviews, whose leader is the donor, $i^{\star}_{\mathrm{rev}}=\ell$. An unchanged answer therefore remains represented at each stage where it occurs. For each observed nonempty answer key $a$, we compute:
\begin{equation}
W(a)=\sum_{(t,i,y)\in\mathcal{P}}
\left(1+\beta\mathbb{I}[i=i^{\star}_{t}]\right)\mathbb{I}[\kappa(y)=a],
\qquad \beta=0.5.
\label{eq:output}
\end{equation}
\method{} returns the answer $a$ with the largest weight $W(a)$, which is denoted as $\hat{a}$. In case of any ties, they are broken by the number of leader votes and then by the total number of votes. As the final response, we return one pooled response that gives this answer, preferring a stage leader's response, then the most recent stage (later collaboration rounds, then retained reviews, then initial responses), and then the lower agent index. The vote is taken over the full pool $\mathcal{P}$ even when collaboration stops early because all agents agree.

\begin{table*}[tp]
\caption{\textbf{Main results on Qwen2.5-1.5B and Ministral-3-3B.} Comparison of \method{} with single-agent and multi-agent baselines across six reasoning benchmarks. We report accuracy (\%) as mean $\pm$ sample standard deviation over three runs; AVG denotes the macro-average across benchmarks. Bold and underlined values indicate the best and second-best results within each backbone, respectively, including ties.}
\label{tab:main-results}
\centering
\footnotesize
\setlength{\tabcolsep}{2pt}
\renewcommand{\arraystretch}{1.08}
\resizebox{\textwidth}{!}{%
\begin{tabular}{l|cccccc|c}
\toprule
\rowcolor{gray!20}
Method & MATH & GSM8K & AQuA & GSM-H & MMLU & GPQA & AVG \\
\midrule
\rowcolor{qwenblue!14}
\multicolumn{8}{c}{\textbf{Qwen2.5-1.5B-Instruct}} \\
\midrule

Single  & \meanstd{72.07}{2.00} & \meanstd{69.80}{0.72} & \meanstd{61.80}{1.73} & \meanstd{34.27}{0.50} & \bestmeanstd{54.93}{0.12} & \secondmeanstd{28.96}{2.78} & $\underline{53.64}$ \\
CoT     & \meanstd{70.00}{0.72} & \meanstd{71.40}{2.23} & \meanstd{59.20}{0.92} & \meanstd{31.80}{0.40} & \meanstd{52.87}{1.10} & \meanstd{28.11}{3.21} & $52.23$ \\
MOC     & \meanstd{70.20}{1.25} & \meanstd{71.20}{0.92} & \meanstd{44.67}{0.83} & \meanstd{34.27}{1.22} & \meanstd{50.20}{1.73} & \meanstd{26.94}{2.04} & $49.58$ \\
MAD-M$^2$ & \meanstd{72.00}{2.42} & \secondmeanstd{73.80}{0.80} & \meanstd{62.33}{0.61} & \secondmeanstd{34.80}{0.53} & \meanstd{52.20}{1.04} & \meanstd{25.25}{2.67} & $53.40$ \\
G-Designer & \meanstd{71.93}{0.12} & \meanstd{72.47}{0.64} & \meanstd{60.13}{1.80} & \meanstd{34.13}{0.95} & \meanstd{52.67}{1.22} & \meanstd{27.78}{5.13} & $53.19$ \\
SelfOrg & \secondmeanstd{72.87}{1.30} & \meanstd{71.53}{1.75} & \secondmeanstd{62.53}{1.30} & \meanstd{34.00}{1.40} & \meanstd{49.87}{2.04} & \meanstd{27.95}{1.17} & $53.12$ \\
\rowcolor{green!12}
\textbf{\method{}} & \bestmeanstd{78.47}{1.01} & \bestmeanstd{77.33}{1.17} & \bestmeanstd{69.07}{0.64} & \bestmeanstd{39.00}{1.06} & \secondmeanstd{54.33}{1.63} & \bestmeanstd{31.65}{3.04} & $\mathbf{58.31}$ \\
\midrule
\rowcolor{ministralorange!18}
\multicolumn{8}{c}{\textbf{Ministral-3-3B-Instruct-2512}} \\
\midrule

Single  & \meanstd{89.93}{0.12} & \meanstd{90.53}{0.58} & \meanstd{72.13}{1.22} & \meanstd{45.13}{0.31} & \meanstd{71.00}{1.39} & \meanstd{36.53}{4.58} & $67.54$ \\
CoT     & \meanstd{89.47}{1.63} & \meanstd{90.93}{1.03} & \meanstd{74.27}{2.39} & \meanstd{46.73}{0.90} & \meanstd{72.13}{0.23} & \meanstd{36.03}{2.54} & $68.26$ \\
MOC     & \meanstd{91.20}{0.53} & \meanstd{90.73}{0.61} & \meanstd{85.40}{1.06} & \meanstd{51.53}{0.81} & \meanstd{71.80}{0.72} & \meanstd{44.44}{2.81} & $72.52$ \\
MAD-M$^2$ & \meanstd{91.53}{0.42} & \secondmeanstd{92.13}{0.31} & \meanstd{85.47}{0.81} & \secondmeanstd{51.93}{0.42} & \secondmeanstd{73.67}{1.33} & \bestmeanstd{47.98}{2.81} & $\underline{73.79}$ \\
G-Designer & \secondmeanstd{95.27}{0.64} & \meanstd{91.40}{0.60} & \meanstd{81.67}{1.36} & \meanstd{51.80}{0.92} & \meanstd{71.00}{1.71} & \secondmeanstd{47.64}{1.46} & $73.13$ \\
SelfOrg & \secondmeanstd{95.27}{0.12} & \meanstd{90.80}{0.72} & \secondmeanstd{86.47}{0.31} & \meanstd{48.47}{0.61} & \meanstd{73.47}{0.31} & \meanstd{43.94}{0.87} & $73.07$ \\
\rowcolor{green!12}
\textbf{\method{}} & \bestmeanstd{95.93}{0.12} & \bestmeanstd{93.07}{0.50} & \bestmeanstd{87.93}{0.23} & \bestmeanstd{53.53}{1.21} & \bestmeanstd{75.73}{0.50} & \meanstd{44.28}{3.25} & $\mathbf{75.08}$ \\
\bottomrule
\end{tabular}
}
\end{table*}

\section{Experiments}
\label{sec:experiments}

We evaluate whether \method{} improves reasoning across tasks and backbones, and how its benefits change with model scale, unreliable peer messages, mixed teams, and visual inputs (Appendix~\ref{sec:mmmu-pro}).

\paragraph{Tasks and baselines.}
Our main evaluation covers mathematical reasoning with GSM8K~\citep{cobbe2021gsm8k}, MATH~\citep{hendrycks2021math}, GSM-Hard~\citep{gao2023pal}, and AQuA-RAT~\citep{ling2017aqua}, together with knowledge and scientific reasoning on MMLU~\citep{hendrycks2021mmlu} and GPQA-Diamond~\citep{rein2024gpqa}. We use Qwen2.5-1.5B-Instruct~\citep{qwen2025qwen25} and Ministral-3-3B-Instruct~\citep{liu2026ministral3} as the main backbones. Baselines include a single model call (Single), chain-of-thought (CoT)~\citep{wei2023chainofthoughtpromptingelicitsreasoning}, and the multi-agent methods SelfOrg~\citep{tastan2026selforg}, MOC~\citep{guan2026moc}, MAD-M$^2$~\citep{tian2026madmm}, and G-Designer~\citep{zhang2025gdesigner}. Single and CoT provide reference points for reasoning without collaboration, while the multi-agent baselines span different approaches to communication and memory. Amongst all baselines compared, SelfOrg is the closest to \method{}. Unlike the fixed random DAG of MOC or the learned graph of G-Designer, both methods require no training and rebuild their DAG in every round from the agents' current responses. We have described our evaluation protocol in Appendix~\ref{app:evaluation-protocol}. 

\subsection{Main Results}
\label{sec:results}

We evaluate \method{} across six reasoning benchmarks using two main backbones, then examine its behavior as team size, model size, team composition, and message reliability change. Additional vision-language results are reported in the Appendix~\ref{sec:mmmu-pro}.

Table~\ref{tab:main-results} shows that \method{} achieves the highest average accuracy on both backbones. With Qwen2.5-1.5B, it improves on the strongest multi-agent baseline, MAD-M$^2$, by $+4.9$ points and outperforms every multi-agent baseline on all six benchmarks. With Ministral-3-3B, the multi-agent baselines perform similarly, and \method{} improves on the strongest of them by $+1.3$ points. \method{} also consistently outperforms SelfOrg, its closest methodological competitor, with gains of $+5.2$ points on Qwen2.5-1.5B and $+2.0$ points on Ministral-3-3B.

These results show that \method{} is beneficial when the agents are weak. On Qwen2.5-1.5B, none of the evaluated multi-agent baselines outperforms a single model call in macro-average accuracy across the six benchmarks, whereas \method{} does. 
On Ministral-3-3B, all evaluated multi-agent methods outperform Single. Relative to SelfOrg, which also rebuilds a response-dependent DAG each round, \method{} improves macro-average accuracy by 5.19 and 2.01 points, respectively. 
This comparison supports the effectiveness of our proposed framework compared with SelfOrg's framework. 

We further analyze SAGE’s components in Appendix~\ref{app:ablation}. 
Across the two main backbones, removing prompt adaptation reduces macro-average accuracy 
by $2.10$--$2.18$ percentage points, while restricting weighted pool voting to the final round reduces it by $0.36$--$0.79$ points. Additional analyses examine the contribution of donor selection and which roles are selected as donors. 
These results characterize the contributions of SAGE's adaptation, donor-selection, and answer-aggregation mechanisms. 

\begin{figure}[tbp]
\centering
\includegraphics[width=\textwidth]{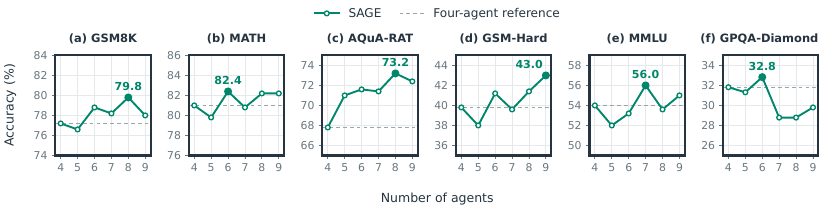}
\par\medskip
\includegraphics[width=\textwidth]{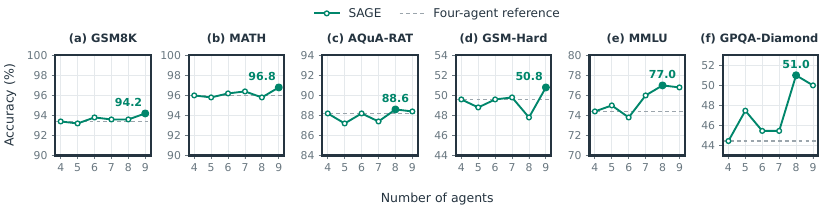}
\vspace{-1.5em}
\caption{\textbf{Effect of scaling the number of agents.} We evaluate \method{} across six reasoning benchmarks with Qwen2.5-1.5B-Instruct (top) and Ministral-3-3B-Instruct (bottom) as the number of agents increases from 4 to 9.  Accuracy generally improves with more agents on both backbones.}
\label{fig:sage-agent-scaling}
\end{figure}
\vspace{-2em}

\subsection{Scaling the Number of Agents}
\label{sec:agent-scaling}

We study how \method{} scales with team size by increasing the number of agents from 4 to 9 with both backbones, keeping the questions and all other settings fixed. Figure~\ref{fig:sage-agent-scaling} shows that accuracy generally improves as agents are added. With Qwen2.5-1.5B, the average accuracy rises by $+1.5$ points from four to nine agents, with the largest gains on AQuA-RAT and GSM-Hard. With Ministral-3-3B, it rises by $+1.8$ points, with the largest gains on GPQA-Diamond and MMLU.

These gains may reflect how \method{} uses additional agents.
First, each new agent contributes an independently generated initial response, providing more responses for computing answer agreement. Second, each brings a distinct role prompt, expanding the set of reasoning strategies available for donor selection.
These additions may help the team identify useful answers and strategies. Meanwhile, each agent still reads at most two parent responses per round, so increasing team size does not increase this per-agent communication limit by design of \method{}.

\subsection{Scaling the Backbone Size}
\label{sec:scaling-laws}

\begin{figure}[tbp]
\centering
\begin{minipage}[t]{0.50\textwidth}
\vspace{0pt}
\centering
\scriptsize
\setlength{\tabcolsep}{1.5pt}
\renewcommand{\arraystretch}{1.49}
\newcommand{\scalepart}{\textcolor{black!28}{\rule[-0.55ex]{0.4pt}{2.65ex}}}
\newcommand{\qwenScalingRows}{%
0.5B & \scalepart & \meanstd{14.80}{0.35} & \bestmeanstd{19.20}{0.20} & \scalepart & \meanstd{22.56}{3.79} & \bestmeanstd{31.14}{2.78} \\
1.5B & \scalepart & \meanstd{34.00}{1.93} & \bestmeanstd{39.00}{0.53} & \scalepart & \meanstd{27.10}{0.77} & \bestmeanstd{30.64}{2.04} \\
3B & \scalepart & \meanstd{45.27}{0.31} & \bestmeanstd{49.67}{0.70} & \scalepart & \bestmeanstd{28.96}{4.85} & \meanstd{28.79}{2.02} \\
7B & \scalepart & \meanstd{53.20}{1.20} & \bestmeanstd{56.93}{1.29} & \scalepart & \meanstd{34.85}{1.75} & \bestmeanstd{36.36}{2.20} \\
14B & \scalepart & \meanstd{56.87}{1.21} & \bestmeanstd{59.60}{0.35} & \scalepart & \meanstd{41.92}{2.81} & \bestmeanstd{42.42}{2.67} \\
32B & \scalepart & \meanstd{61.00}{0.20} & \bestmeanstd{62.20}{0.72} & \scalepart & \bestmeanstd{49.66}{1.17} & \meanstd{47.47}{2.81} \\
72B & \scalepart & \meanstd{60.40}{0.20} & \bestmeanstd{63.33}{0.61} & \scalepart & \meanstd{49.16}{1.46} & \bestmeanstd{51.01}{1.75} \\
}
\begin{tabular}{@{}c@{\hspace{1.5pt}}c@{\hspace{1.5pt}}c@{\hspace{5.5pt}}>{\columncolor{green!12}}c@{\hspace{1.5pt}}c@{\hspace{1.5pt}}c@{\hspace{5.5pt}}>{\columncolor{green!12}}c@{}}
\toprule
\textbf{Size} & \scalepart & \multicolumn{2}{c}{\textbf{GSM-Hard}} & \scalepart & \multicolumn{2}{c}{\textbf{GPQA-Diamond}} \\
\cmidrule(lr){3-4}\cmidrule(lr){6-7}
& \scalepart & \textcolor{qwenblue}{Single} & \textbf{\method{}}
& \scalepart & \textcolor{qwenblue}{Single} & \textbf{\method{}} \\
\midrule
\qwenScalingRows
\bottomrule
\end{tabular}
\end{minipage}%
\hfill
\begin{minipage}[t]{0.48\textwidth}
\vspace{0pt}
\centering
\includegraphics{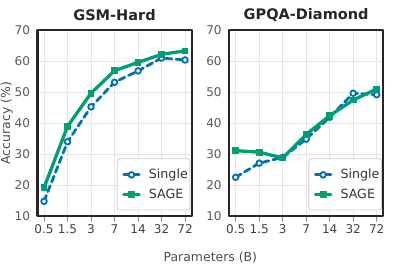}
\end{minipage}
\caption{\textbf{Backbone scaling.} We compare Single and \method{} on GSM-Hard and GPQA-Diamond using Qwen2.5-Instruct models ranging from 0.5B to 72B parameters. The table reports mean $\pm$ sample SD over three runs, with the higher mean in bold; the plots show the same results with evenly spaced model sizes.}
\label{fig:qwen25-scaling}
\end{figure}

We evaluate Qwen2.5-Instruct backbones from 0.5B to 72B on GSM-Hard and GPQA-Diamond. We keep the number of agents, parent limit, and round limit fixed (Figure~\ref{fig:qwen25-scaling}). On GSM-Hard, \method{} outperforms Single at every model size, with gains ranging from 1.2 to 5.0 percentage points. At 32B, it also exceeds the accuracy of Single at 72B (62.20\% versus 60.40\%), although this comparison does not imply lower inference cost.

On GPQA-Diamond, the effect is less consistent. The largest gain occurs at 0.5B, where accuracy increases from 22.56\% to 31.14\%, but \method{} has lower mean accuracy than Single at 3B and 32B. 
Overall, \method{} demonstrates some benefit over single-agent performance, particularly with smaller backbones.

\subsection{Lifting Teams with Weak Agents}
\label{sec:heterogeneous-backbones}

We evaluate \method{} with a heterogeneous team of nine agents (three each from Qwen2.5-1.5B, Ministral-3-3B, and Phi-4-mini) and compare it with homogeneous nine-agent teams of each backbone on GSM8K and MMLU. As Figure~\ref{fig:heterogeneous-summary} shows, the mixed team performs close to the strongest homogeneous team, Ministral, even though only a third of its agents use Ministral. It also outperforms the homogeneous Qwen and Phi teams on both benchmarks and on average exceeds the mean of the three homogeneous teams by about $+5$ points.

\begin{figure*}[tbp]
\centering
\begin{minipage}[c]{0.47\textwidth}
\centering
\begingroup
\small
\setlength{\tabcolsep}{2pt}
\renewcommand{\arraystretch}{1.45}
\input{figures/heterogeneous_backbone_composition_rows.tex}
\begin{tabular}{l|ccc|c}
\toprule
Composition & Q/M/P & GSM8K & MMLU & AVG \\
\midrule
\heterogeneousComparisonRows
\bottomrule
\end{tabular}
\endgroup
\end{minipage}%
\hfill
\begin{minipage}[c]{0.525\textwidth}
\centering
\includegraphics[width=\linewidth]{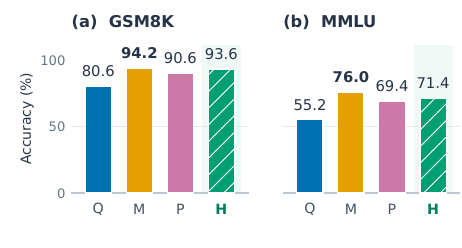}
\end{minipage}
\caption{\textbf{Accuracy by team composition.} Accuracy (\%) for nine-agent teams on GSM8K and MMLU (one run). Q/M/P counts Qwen/Ministral/Phi agents; H denotes the mixed team. AVG is the benchmark mean; bold/underlined values mark best/second best.}
\label{fig:heterogeneous-summary}
\end{figure*}

This shows that \method{} can achieve strong team performance even when most of its agents use weaker backbones. It does not need to know in advance which backbone is strongest, because it scores agents based on their responses to each problem. Ministral agents supply most donors ($72\%$ on GSM8K and $65\%$ on MMLU), although they make up only a third of the team, while donors are also selected from other backbones. Messages flow only from higher-scoring to lower-scoring agents, so lower-scoring agents cannot directly pass their answers to higher-scoring ones, but they can still use higher-scoring agents' responses and the donor's guidance to revise their own answers. The weaker backbones may also add value. If different models make different mistakes, agreement across model families may provide useful evidence for an answer. These experiments demonstrate that \method{} performs close to a team built entirely from the strongest model.

\subsection{Robustness to Adversarial Attacks}
\label{sec:byzantine-robustness}

\begin{wrapfigure}{r}{0.47\textwidth}
\vspace{-\baselineskip}
\centering
\includegraphics[width=\linewidth]{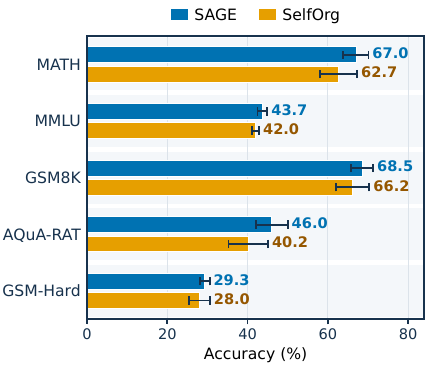}
\caption{\textbf{Message corruption.} \method{} and SelfOrg across five benchmarks with three of nine agents corrupted. Bars show means; error bars show sample SD over three runs.}
\label{fig:byzantine-b3}
\end{wrapfigure}

We test whether \method{} stays reliable when some agents deliberately mislead the team. In a nine-agent Qwen2.5-1.5B-Instruct team, three randomly chosen agents are corrupted. Each replaces the final answer in every message it shares with a fixed wrong answer and claims that this answer is correct. \method{} and SelfOrg face the same corrupted agents and wrong answers on five benchmarks. Appendix~\ref{app:corruption-setup} describes the full protocol.

Figure~\ref{fig:byzantine-b3} shows that \method{} outperforms SelfOrg on all five benchmarks under message corruption. The gains are largest on AQuA-RAT ($+5.8$ points) and MATH ($+4.3$ points), and \method{} also improves on GSM8K, MMLU, and GSM-Hard. These results suggest that \method{} can retain an advantage over SelfOrg under the tested corruption protocol, without a separate attack detector. Answer agreement and prefix consistency may help
limit the influence of corrupted messages when injected answers
disagree with peer responses or are inconsistent with the preceding
reasoning. If these signals assign a corrupted agent a low score,
its outgoing connections are restricted by the routing rule.
In addition, the rewriter sees only the original role prompts,
so injected response text is excluded from its input. 
However, corrupted responses may still affect donor selection.

\section{Conclusion}
\label{sec:conclusion}

We presented \method{}, a training-free framework that adapts expert instructions and communication for multi-agent reasoning. Guided by evidence from the agents' responses, \method{} transfers reasoning strategies between role prompts while preserving each expert's specialization, then coordinates their collaboration through a dynamic sparse graph. Experiments across multiple backbones and text and vision-language benchmarks demonstrate improvements over the evaluated single and multi-agent baselines. These findings support combining adaptation of expert instructions with response-dependent communication to improve collaborative reasoning.

\section*{Acknowledgements}
Junpei Komiyama was supported by the MBZUAI Start-up Fund [BF0121].

\section*{AI Use Statement}
\addcontentsline{toc}{section}{AI Use Statement}

We used generative AI tools to polish the sentences and to code during the preparation of this work. AI-assisted text was reviewed and edited by the authors, and AI-assisted code was manually inspected and tested for correctness before being used in our experiments. The authors take full responsibility for the experimental design, analysis, claims, and final content of the paper.

\section*{Reproducibility Statement}
\addcontentsline{toc}{section}{Reproducibility Statement}

Section~\ref{sec:methodology} specifies the \method{} inference procedure, summarized in Algorithm~\ref{alg:sage_overview}, with the reciprocal peer-review call detailed in Appendix~\ref{app:sage-algorithm}. Appendix~\ref{app:implementation} documents the main agent and baseline configurations, decoding parameters, and answer extraction and evaluation protocol. Appendix~\ref{app:agent-prompts} provides the role prompts, the instructions used for prefix consistency, reciprocal review, prompt rewriting, and collaboration, and the Single and CoT baseline prompts. The experimental tables and figure captions specify how results are aggregated and identify studies evaluated with a single run.

\section*{Ethics Statement}
\addcontentsline{toc}{section}{Ethics Statement}

This work studies training-free coordination among existing language and vision-language models on established reasoning benchmarks. We do not conduct human-subject experiments or collect new personal data. Agent agreement and prefix consistency provide imperfect evidence of response quality, and collaboration can reinforce errors or biases shared by the underlying models. The robustness findings are limited to the tested setting of fixed, non-adaptive misleading messages. Applications involving consequential decisions require independent validation of the system's outputs.

\bibliographystyle{iclr2027_conference}
\bibliography{iclr2027_conference}

\clearpage
\begingroup
\setlength{\parskip}{1.3pt}
\tableofcontents
\endgroup
\clearpage

\input{appendix}

\end{document}

%% file: figures/sage_method_figure.tex
\begin{figure}[!b]
\centering
\noindent\makebox[\linewidth][c]{%
  \resizebox{\linewidth}{!}{\input{figures/sage_method_overview}}%
}\par
\caption{\textbf{Overview of \method{}.}
\textbf{(1)}~$N$ agents answer the query $x$ independently. Each response is scored by answer agreement plus prefix consistency, and reciprocal peer review between higher- and lower-scoring agents selects a strategy donor (here, agent A).
\textbf{(2)}~Every other agent's role prompt is rewritten to include the donor's reasoning guidance while keeping its role; the rewriter sees only the two prompts, not the problem or any response.
\textbf{(3)}~Agents revise their answers along a sparse directed acyclic graph (DAG), reading the updated answers of up to $K$ higher-scoring parents; scores and the DAG are rebuilt after every round.
\textbf{(4)}~A weighted vote over initial answers, retained reviews, and round answers, with higher weight for each stage's leader (gold rings), selects the final answer $\widehat a$.}
\label{fig:sage-method-overview}
\end{figure}

%% file: figures/sage_method_overview.tex
\begingroup%
\definecolor{sgInk}{HTML}{20334A}%
\definecolor{sgMuted}{HTML}{5A697D}%
\definecolor{sgLine}{HTML}{CBD5DF}%
\definecolor{sgBlue}{HTML}{347BC3}%
\definecolor{sgOrange}{HTML}{D18C32}%
\definecolor{sgPurple}{HTML}{8C63BC}%
\definecolor{sgGreen}{HTML}{228C7A}%
\definecolor{sgGold}{HTML}{EBB532}%
\definecolor{stA}{HTML}{3E64C8}%
\definecolor{stB}{HTML}{B24C98}%
\definecolor{stC}{HTML}{1E9474}%
\definecolor{stD}{HTML}{DB7F1C}%
\IfFileExists{figures/sage_illustrations.tex}{\input{figures/sage_illustrations.tex}}{\input{sage_illustrations.tex}}%
\newcommand{\sgStage}[6]{%
 \path[fill=sgInk!7,rounded corners=12pt] ({#1+2},17) rectangle ({#2+2},447);
 \path[fill=#3!5!white,draw=#3!45,line width=1.3pt,rounded corners=12pt] (#1,14) rectangle (#2,444);
 \node[inner sep=0pt,font=\sffamily\bfseries\fontsize{21.5}{24}\selectfont,text=sgInk] (sgTitle) at ({(#1+#2)/2+18},41) {#5};
 \filldraw[fill=#3,draw=white,line width=1.2pt] ([xshift=-22pt]sgTitle.west) circle (14);
 \node[font=\sffamily\bfseries\fontsize{18}{20}\selectfont,text=white] at ([xshift=-22pt,yshift=-.5pt]sgTitle.west) {#4};
 \node[font=\sffamily\fontsize{16}{19}\selectfont,text=#3!75!sgInk] at ({(#1+#2)/2},70) {#6};
}%
\newcommand{\sgChevron}[2]{%
 \path[fill=#2] ({#1-8},211) -- ({#1+7},229) -- ({#1-8},247) -- ({#1-3},229) -- cycle;
}%
\newcommand{\sgCard}[5]{%
 \begin{scope}[shift={(#1,#2)},scale=#5]
  \path[fill=white,draw=#3!85,line width=1.3pt,rounded corners=4pt] (-32,-25) rectangle (32,25);
  \path[fill=#3!22,rounded corners=3.5pt] (-31.3,-24.3) rectangle (31.3,-11);
  \fill[#3] (-24,-17.6) circle (2.1); \fill[#3!70] (-17,-17.6) circle (2.1); \fill[#3!45] (-10,-17.6) circle (2.1);
  \draw[#3!85,line width=2.3pt] (-23,-2) -- (-3,-2);
  \draw[#3!65,line width=2.3pt] (-23,7) -- (-6,7);
  \draw[#3!45,line width=2.3pt] (-23,16) -- (-10,16);
  \ifnum#4=1\relax
   \draw[sgBlue,line width=1.9pt] (15,-3) -- (15,7) -- (7,17) (15,7) -- (23,17);
   \foreach \xx/\yy in {15/-3,7/17,23/17}{\fill[sgBlue] (\xx,\yy) circle (2.6);}
  \else
   \draw[#3!85,line width=2.3pt] (4,-2) -- (23,-2);
   \draw[#3!65,line width=2.3pt] (4,7) -- (20,7);
   \draw[#3!45,line width=2.3pt] (4,16) -- (14,16);
  \fi
 \end{scope}%
}%
\newcommand{\sgCrown}[3]{%
 \begin{scope}[shift={(#1,#2)},scale=#3,line join=round,line width=1.1pt]
  \path[fill=sgGold,draw=sgInk] (-13,7) -- (-15,-7) -- (-6,0) -- (0,-11) -- (6,0) -- (15,-7) -- (13,7) -- cycle;
  \foreach \xx/\yy in {-15/-7,0/-11,15/-7}{\filldraw[fill=sgGold,draw=sgInk] (\xx,\yy) circle (2.3);}
  \fill[white] (0,2.5) circle (1.8);
 \end{scope}%
}%
\newcommand{\sgBubble}[4]{%
 \path[draw=#3!70,fill=white,line width=1pt,rounded corners=3pt] ({#1+18},{#2-27}) rectangle ({#1+40},{#2-6});
 \path[fill=white,draw=#3!70,line width=1pt] ({#1+18},{#2-13}) -- ({#1+13},{#2-6}) -- ({#1+22},{#2-6.5});
 \node[font=\sffamily\bfseries\fontsize{14}{16}\selectfont,text=sgInk] at ({#1+29},{#2-16.5}) {#4};
}%
\newcommand{\sgBadge}[4]{%
 \filldraw[fill=#3,draw=white,line width=.9pt] (#1,#2) circle (7.5);
 \node[font=\sffamily\bfseries\fontsize{10.5}{12}\selectfont,text=white] at (#1,{#2+.4}) {#4};
}%
\newcommand{\sgVote}[5]{%
 \ifnum#5=1\relax
  \filldraw[fill=#3!22,draw=sgGold,line width=2.6pt] (#1,#2) circle (9.5);
 \else
  \filldraw[fill=#3!22,draw=#3!80,line width=1pt] (#1,#2) circle (8.5);
 \fi
 \node[font=\sffamily\bfseries\fontsize{10.5}{12}\selectfont,text=sgInk] at (#1,{#2+.3}) {#4};
}%
\begin{tikzpicture}[
 x=1pt,y=-1pt,line cap=round,line join=round,
 font=\sffamily\fontsize{17}{20}\selectfont,text=sgInk,
 >={Latex[length=6pt,width=5pt]},
 flow/.style={->,draw=sgInk!75,line width=1.4pt},
 note/.style={font=\sffamily\fontsize{17}{20}\selectfont,text=sgMuted},
 small/.style={font=\sffamily\fontsize{16}{19}\selectfont,text=sgMuted},
 agentlabel/.style={font=\sffamily\fontsize{14.5}{17}\selectfont,text=sgInk!85},
 mathlabel/.style={font=\fontsize{20}{23}\selectfont},
]
\path[use as bounding box] (4,10) rectangle (1040,450);

\sgStage{10}{266}{stA}{1}{Select a donor}{answer, score, peer-review}
\path[fill=white,draw=sgInk!45,line width=1pt,rounded corners=9pt] (88,94) rectangle (188,116);
\sgIDocument{105}{105}{sgInk}{.3}
\node[font=\sffamily\fontsize{15}{18}\selectfont] at (146,105) {query $x$};
\draw[sgInk!55,line width=1.1pt] (138,116) -- (138,124) (42,124) -- (222,124);
\foreach \xx/\col/\ii/\ans/\nm in {42/sgBlue/1/42/A,102/sgOrange/2/36/B,162/sgPurple/3/42/C,222/sgGreen/4/24/D}{
 \draw[->,sgInk!55,line width=1.1pt] (\xx,124) -- (\xx,142);
 \sgIRobot{\xx}{162}{\col}{.56}{\ii}
 \sgBubble{\xx}{162}{\col}{\ans}
 \node[agentlabel,font=\sffamily\bfseries\fontsize{16}{19}\selectfont] at (\xx,194) {\nm};
}
\draw[sgInk!45,line width=1pt] (22,286) -- (254,286);
\foreach \xx/\col/\q/\z/\fr/\val in {42/sgBlue/28/28/{2/4}/1.00,102/sgOrange/14/0/{1/4}/0.25,162/sgPurple/28/0/{2/4}/0.50,222/sgGreen/14/0/{1/4}/0.25}{
 \path[fill=\col,rounded corners=1pt] ({\xx-14},{286-\q}) rectangle ({\xx+14},286);
 \node[font=\sffamily\bfseries\fontsize{10.5}{12}\selectfont,text=white] at (\xx,{286-\q/2+.3}) {\fr};
 \ifdim\z pt>0pt
  \path[fill=\col!25,draw=\col,line width=.8pt,rounded corners=1pt] ({\xx-14},{286-\q-\z}) rectangle ({\xx+14},{286-\q});
  \draw[\col,line width=2pt] ({\xx-6},{286-\q-\z/2}) -- ({\xx-1.5},{286-\q-\z/2+5}) -- ({\xx+7},{286-\q-\z/2-6});
 \fi
 \node[font=\sffamily\fontsize{13.5}{16}\selectfont,text=sgInk] at (\xx,{286-\q-\z-9}) {\val};
}
\node[mathlabel] at (27,306) {$\rho$};
\node[small,text=sgInk] at (42,306) {$=$};
\path[fill=sgInk!55,rounded corners=1pt] (51,299) rectangle (63,313);
\node[small,anchor=west] at (65,306) {agreement};
\node[small,text=sgInk,anchor=west] at (146,306) {$+\,\lambda$};
\path[fill=sgInk!12,draw=sgInk!55,line width=.8pt,rounded corners=1pt] (181,299) rectangle (195,313);
\draw[sgInk!70,line width=1.4pt] (184.5,306) -- (187.5,309.5) -- (192.5,302.5);
\node[small,anchor=west] at (197,306) {prefix};
\sgIRobot{40}{372}{sgBlue}{.52}{1}
\sgIRobot{106}{372}{sgOrange}{.52}{2}
\draw[flow,sgInk!55] (58,362) .. controls (68,352) and (78,352) .. (88,362);
\draw[flow,sgInk!55] (88,382) .. controls (78,392) and (68,392) .. (58,382);
\sgIMagnifier{126}{390}{stA}{.34}
\node[font=\sffamily\bfseries\fontsize{14.5}{17}\selectfont,text=sgInk!80] at (40,400) {A};
\node[font=\sffamily\bfseries\fontsize{14.5}{17}\selectfont,text=sgInk!80] at (106,400) {B};
\node[small] at (73,422) {peer review};
\draw[flow] (146,372) -- (180,372);
\sgIRobot{212}{372}{sgBlue}{.56}{1}
\sgCrown{212}{345}{.62}
\node[font=\sffamily\bfseries\fontsize{16}{19}\selectfont,text=sgInk] at (212,406) {Agent A};
\node[font=\sffamily\bfseries\fontsize{14}{16}\selectfont,text=sgGold!60!black] at (210,424) {wins $\rightarrow$ donor};

\sgChevron{278}{stB}

\sgStage{290}{526}{stB}{2}{Transfer strategy}{share strategy, keep identity}
\sgCard{410}{126}{sgBlue}{0}{.9}
\node[mathlabel,anchor=east] at (377,126) {$s_\ell$};
\sgIRobot{466}{130}{sgBlue}{.5}{1}
\sgCrown{466}{106}{.52}
\node[agentlabel,font=\sffamily\bfseries\fontsize{16}{19}\selectfont] at (492,131) {A};
\foreach \xx in {344,410,476}{
 \draw[->,sgBlue,line width=1.6pt] (410,150) -- (\xx,194);
}
\node[font=\sffamily\fontsize{13.5}{16}\selectfont,text=sgBlue,anchor=east] at (378,164) {strategy};
\foreach \xx/\col/\nm in {344/sgOrange/B,410/sgPurple/C,476/sgGreen/D}{
 \sgCard{\xx}{214}{\col}{0}{.72}
 \draw[flow] (\xx,234) -- (\xx,270);
 \sgIPencil{\xx+13}{252}{stB}{.3}
 \sgCard{\xx}{292}{\col}{1}{.72}
 \node[agentlabel,font=\sffamily\bfseries\fontsize{16}{19}\selectfont] at ({\xx+14},332) {\nm};
}
\foreach \xx/\col/\ii in {344/sgOrange/2,410/sgPurple/3,476/sgGreen/4}{\sgIRobot{\xx-8}{332}{\col}{.44}{\ii}}
\node[mathlabel,anchor=east] at (318,214) {$s_i$};
\node[mathlabel,anchor=east] at (318,292) {$\tilde{s}_i$};
\draw[sgInk!55,line width=2pt] (310,363) -- (324,363);
\draw[sgInk!40,line width=2pt] (310,369) -- (320,369);
\node[small,anchor=west] at (328,366) {own role};
\draw[sgBlue,line width=1.6pt] (416,359) -- (416,365) -- (411,372) (416,365) -- (421,372);
\foreach \xx/\yy in {416/359,411/372,421/372}{\fill[sgBlue] (\xx,\yy) circle (1.8);}
\node[small,anchor=west] at (427,366) {+ strategy};
\sgIShield{318}{408}{stB}{.3}
\node[small,anchor=west] at (330,408) {sees prompts only};

\sgChevron{538}{stC}

\sgStage{550}{780}{stC}{3}{Collaborate}{$T$ rounds on a sparse DAG}
\begin{scope}[xshift=-46pt]
\path[fill=white,draw=stC!30,line width=1pt,rounded corners=8pt] (628,92) rectangle (814,306);
\path[fill=white,draw=stC!45,line width=1pt,rounded corners=8pt] (621,99) rectangle (807,313);
\path[fill=white,draw=stC!70,line width=1.2pt,rounded corners=8pt] (614,106) rectangle (800,320);
\path[fill=stC,rounded corners=5pt] (620,112) rectangle (678,130);
\node[font=\sffamily\bfseries\fontsize{13.5}{16}\selectfont,text=white] at (649,121.5) {round $t$};
\draw[->,stC!70,line width=1.5pt] (636,308) -- (636,146);
\node[font=\sffamily\fontsize{13.5}{16}\selectfont,text=sgMuted,rotate=90] at (626,237) {higher $\rho$};
\draw[flow] (710,149) -- (710,185);
\draw[flow] (694,140) .. controls (652,169) and (646,242) .. (660,268);
\draw[flow] (726,140) .. controls (768,169) and (774,242) .. (760,268);
\draw[flow] (698,226) -- (682,263);
\draw[flow] (722,226) -- (738,263);
\sgIRobot{710}{131}{sgBlue}{.52}{1}
\sgIRobot{710}{205}{sgPurple}{.52}{3}
\sgIRobot{672}{286}{sgOrange}{.52}{2}
\sgIRobot{748}{286}{sgGreen}{.52}{4}
\foreach \xx/\yy/\nm in {732/121/A,732/195/C,650/298/B,770/298/D}{\node[font=\sffamily\bfseries\fontsize{14.5}{17}\selectfont,text=sgInk!80] at (\xx,\yy) {\nm};}
\sgIDocument{771}{219}{sgBlue}{.3}
\node[font=\fontsize{17}{20}\selectfont] at (789,188) {$y_j^t$};
\path[fill=white,draw=stC!60,line width=1pt,rounded corners=6pt] (618,342) rectangle (806,392);
\node[font=\fontsize{19}{22}\selectfont] at (712,359) {$\rho=q+\lambda\,z_\tau$};
\node[font=\sffamily\fontsize{12.5}{15}\selectfont,text=sgMuted] at (712,382) {$q$: agreement \enspace $z_\tau$: prefix check};
\begin{scope}[shift={(630,420)}]
 \draw[->,stC,line width=1.8pt] (40:9) arc (40:330:9);
\end{scope}
\node[small,anchor=west] at (644,420) {re-score, rebuild DAG};
\end{scope}

\sgChevron{792}{stD}

\sgStage{804}{1034}{stD}{4}{Pool \& vote}{leaders count $\times 1.5$}
\foreach \yy/\st/\ac/\lab in {128/1/stA/initial,168/1/stA/reviews,208/3/stC/rounds}{
 \sgBadge{826}{\yy}{\ac}{\st}
 \node[agentlabel,anchor=west] at (838,\yy) {\lab};
}
\foreach \xx/\col/\a/\b/\c/\la/\lb/\lc in {
  922/sgBlue/42/42/42/1/1/0,
  951/sgOrange/36/42/42/0/0/0,
  980/sgPurple/42/42/42/0/0/1,
  1009/sgGreen/24/24/36/0/0/0}{
 \sgVote{\xx}{128}{\col}{\a}{\la}
 \sgVote{\xx}{168}{\col}{\b}{\lb}
 \sgVote{\xx}{208}{\col}{\c}{\lc}
}
\foreach \xx/\nm in {922/A,951/B,980/C,1009/D}{\node[font=\sffamily\bfseries\fontsize{12.5}{15}\selectfont,text=sgMuted] at (\xx,104) {\nm};}
\filldraw[fill=white,draw=sgGold,line width=2.4pt] (828,244) circle (6.5);
\node[font=\sffamily\bfseries\fontsize{14.5}{17}\selectfont,text=sgGold!60!black,anchor=west] at (840,244) {leader: weight $1.5$};
\draw[sgInk!65,line width=1pt] (840,268) -- (840,340) -- (990,340);
\foreach \xx/\hh/\bb/\ans in {870/54/10/42,912/14/0/36,954/14/0/24}{
 \path[fill=stD!55,rounded corners=1pt] ({\xx-12},{340-\hh}) rectangle ({\xx+12},340);
 \ifnum\bb>0\relax\path[fill=sgGold,draw=sgGold!70!black,line width=.6pt,rounded corners=1pt] ({\xx-12},{340-\hh-\bb}) rectangle ({\xx+12},{340-\hh});\fi
 \node[font=\sffamily\fontsize{13}{15}\selectfont,text=sgMuted] at (\xx,352) {\ans};
}
\node[mathlabel] at (996,298) {$W(a)$};
\draw[flow,sgGreen] (870,362) -- (870,378);
\path[fill=sgGreen!12,draw=sgGreen!80,line width=1.3pt,rounded corners=8pt] (858,382) rectangle (980,414);
\node[mathlabel] at (919,399) {$\widehat a=42$};
\end{tikzpicture}%
\endgroup%

%% file: figures/heterogeneous_backbone_composition_rows.tex
\newcommand{\heterogeneousComparisonRows}{%
Qwen2.5 & 9/0/0 & 80.6 & 55.2 & 67.9 \\
Ministral-3B & 0/9/0 & \textbf{94.2} & \textbf{76.0} & \textbf{85.1} \\
Phi-4-mini & 0/0/9 & 90.6 & 69.4 & 80.0 \\
\midrule
\rowcolor{green!10}
Heterogeneous & 3/3/3 & \underline{93.6} & \underline{71.4} & \underline{82.5} \\
}

%% file: appendix.tex
\appendix

\section{Related Work}
\label{sec:related-work}

\subsection{Multi-Agent Reasoning}

Test-time reasoning methods improve a single model by eliciting intermediate steps or aggregating several samples \citep{wang2023selfconsistency}. Multi-agent systems instead distribute reasoning across several model instances. Role-based frameworks assign agents different responsibilities and organize their interaction through structured workflows \citep{li2023camel,chen2024agentverse,hong2024metagpt}. Debate methods ask agents to compare arguments and revise their answers over several rounds \citep{du2024debate,liang2024mad}. ReConcile combines diverse models through discussion and confidence-aware answer aggregation \citep{chen2024reconcile}, while Exchange-of-Thought studies different ways to share intermediate reasoning \citep{yin2023exchange}. Other work improves performance by scaling and aggregating independent or layered agents \citep{li2024moreagents,wang2025moa}. However, debate does not always outperform simpler answer-aggregation schemes and can be sensitive to its setup \citep{smit2024mad}. \method{} keeps distinct agent roles but adapts their prompts to the current problem before collaboration.

\subsection{Prompt Optimization}

Automatic prompt optimization (APE) \citep{zhou2023largelanguagemodelshumanlevel} reduces the need for manual prompt design. APE generates and selects candidate instructions, while ProTeGi edits prompts using feedback from model errors \citep{pryzant2023protegi}. OPRO improves instructions using earlier prompts and their scores, and PromptBreeder evolves both task prompts and the prompts used to modify them \citep{yang2024opro,fernando2024promptbreeder}. PRewrite trains a prompt rewriter with reinforcement learning, PRomPTed adapts prompts to individual instances, and SPRIG optimizes reusable system prompts \citep{kong2024prewrite,srivastava2024prompted,zhang2026sprig}.

Prompt optimization has also been studied in multi-agent systems. HiveMind updates agent prompts using contribution estimates, MAPRO jointly refines prompts using topology-aware feedback, and MASS interleaves prompt and topology optimization \citep{xia2026hivemind,zhang2026mapro,zhou2026mass}. In contrast, \method{} performs one adaptation stage for each problem. The agents' responses determine the prompt donor, but the rewriter does not receive the problem or generated solutions.

\subsection{Multi-Agent Communication and Coordination}

Communication structure determines which agents exchange information and how their responses are combined. Prior work studies fixed patterns such as sparse neighbor graphs, layered aggregation, and directed acyclic graphs \citep{li2024sparsedebate,wang2025moa,qian2025macnet}. More adaptive methods select agents or communication paths for each query \citep{liu2024dylan,yue2025masrouter}. Other approaches optimize graph edges, prune redundant messages, generate task-specific topologies, or jointly search prompts and communication structures \citep{zhuge2024gptswarm,zhang2025agentprune,zhang2025gdesigner,zhou2026mass}. SelfOrg instead builds its graph from the agents' current responses \citep{tastan2026selforg}. \method{} follows this response-based view, but also adapts the agents' role prompts before collaboration.

\section{Theoretical Proofs}
\label{app:proofs}

\subsection{Proof of sparse routing with bounded depth}
\begin{proof}[Proof of Lemma~\ref{lem:acyclic-routing}]
Fix a round and order agents by decreasing previous-round score, breaking ties by increasing index. Write $v_r$ for the agent at rank $r$. Each edge follows a strict score decrease, so a directed cycle would imply that a score is strictly greater than itself. The graph is therefore acyclic.

Let $\nu_i=|\{j:\rho_j^{t-1}>\rho_i^{t-1}\}|$. The routing rule selects exactly $\min\{K,\nu_i\}$ parents, giving the exact identity
\begin{equation}
|E^t|=\sum_{i=1}^N\min\{K,\nu_i\}.
\end{equation}
An agent of rank $r$ has at most $r-1$ strictly higher-scoring agents. Hence its in-degree is at most $\min\{K,r-1\}$, and
\begin{equation}
|E^t|
\leq\sum_{r=1}^N\min\{K,r-1\}
=KN-\frac{K(K+1)}{2}.
\end{equation}

For any target, the strictly higher-scoring candidates form a prefix of the fixed order. Selecting its highest-scoring $K$ candidates therefore selects only agents among $v_1,\ldots,v_K$, even when scores tie. Thus every vertex with an outgoing edge belongs to these first $K$ ranks. A directed path with $d$ edges has $d$ distinct nonterminal vertices, all in this set, so $d\leq K$. When $K=0$, the graph has no edges and the same conclusion holds.
\end{proof}

Both bounds are attained for arbitrary pairwise-distinct score vectors: every rank $r$ receives $\min\{K,r-1\}$ parents, and $v_1\to v_2\to\cdots\to v_{K+1}$ is a path when $K\geq1$. This establishes tightness over unrestricted score vectors, without asserting that every such vector is realizable by the discrete agreement and prefix-consistency scores. The guarantee is per round; the union of graphs across rounds need not be acyclic.

\section{Implementation Details}
\label{app:implementation}

\paragraph{Agent setup.}
\label{app:sage-configuration}
\method{} uses four agents with distinct roles, sampled without replacement for each problem from a shared pool. The pool covers problem parsing, algebra, arithmetic, discrete mathematics, geometry and precalculus, competition mathematics, multiple-choice reasoning, academic knowledge, and verification. Each agent retains its role identity and backbone throughout inference. The backbone scaling study uses the same four-agent configuration; the corruption, heterogeneous, and vision-language studies use nine agents with $K=3$ and $T=3$. Complete prompts are provided in Appendix~\ref{app:agent-prompts}.

\begin{table}[tp]
\centering
\small
\caption{Main-study implementation settings. Rounds are counted after initialization. Shared rows apply to both backbones; decoding parameters apply across stages with the listed temperature overrides. Routing degree $K$ and decoding top-$k$ are distinct.}
\label{tab:configuration}
\renewcommand{\arraystretch}{1.12}
\setlength{\tabcolsep}{6pt}
\begin{tabular}{@{}p{0.46\textwidth}cc@{}}
\toprule
Setting & \multicolumn{2}{c}{\method{}} \\
\midrule
Active agents / parent limit / rounds & \multicolumn{2}{c}{$N=4$, $K=2$, $T=3$} \\
Prefix fraction / score weight & \multicolumn{2}{c}{$\tau=0.60$, $\lambda=0.50$} \\
Review budget per score stratum & \multicolumn{2}{c}{$m=2$} \\
Initial, revision, and review temperature & \multicolumn{2}{c}{$0.50$} \\
Prefix / rewrite temperature & \multicolumn{2}{c}{$0.60$ / $0.20$} \\
Leader / other occurrence vote weight & \multicolumn{2}{c}{$1.5$ / $1.0$} \\
Generated tokens per backbone call & \multicolumn{2}{c}{At most $2{,}048$} \\
Context limit & \multicolumn{2}{c}{$32{,}768$ tokens} \\
\midrule
Backbone decoding & Qwen2.5-1.5B & Ministral-3-3B \\
\midrule
Top-$p$ & $0.80$ & $1.0$ \\
Decoding top-$k$ & $20$ & Disabled ($-1$) \\
Repetition penalty & $1.10$ & $1.0$ \\
\bottomrule
\end{tabular}
\end{table}

\paragraph{Baseline configurations.}
\label{app:baseline-setup}
Methods use the same evaluation examples and matched decoding within each backbone. The main multi-agent comparisons also share the sampled roles and collaboration-round limit.
\begin{itemize}
\item \textbf{Single and CoT} perform 1 model call, without and with a zero-shot chain-of-thought instruction, respectively.
\item \textbf{G-Designer} uses 4 agents, 3 rounds, the same sampled SAGE role prompts, and its native final-answer synthesis. For each backbone and seed, we train one graph convolutional network jointly on 200 questions: 40 each from GSM8K, GSM-Hard, AQuA-RAT, MATH, and MMLU, with evaluation questions excluded. We freeze its learned parameters for evaluation on all six benchmarks, including GPQA-Diamond, which contributes no training questions. The backbone language model remains frozen. 
\item \textbf{SelfOrg} uses MiniLM response embeddings, dynamic similarity-based graphs with an edge threshold of $0.75$, and weighted-centroid answer selection.
\item \textbf{MOC} uses a fixed random DAG and two-hop context for answer synthesis. Its synthesis prompt omits the worked example.
\item \textbf{MAD-M$^2$} uses objective token-confidence memory masking and retains its original communication protocol. We report the objective variant because it performed better than the subjective variant.
\end{itemize}

\paragraph{Evaluation protocol.}
\label{app:evaluation-protocol}
All methods use the same evaluation examples within each comparison. In the main study, the multi-agent methods share four roles sampled from a common pool and a three-round limit; SelfOrg and \method{} also share a two-parent limit. We match task-generation settings within each backbone, while retaining each method's inference procedure. Total inference cost varies with the additional operations used by each method.

For the main text benchmarks, xFinder-qwen1505~\citep{yu2025xfinder} extracts answers and xVerify-0.5B-I~\citep{chen2025xverify} evaluates correctness. We evaluate \method{}'s weighted-pool-vote answer. For the main comparison, we report accuracy as mean $\pm$ sample standard deviation over three runs on fixed examples, and use the unweighted mean of the six benchmark accuracies as the macro-average. Comparisons describe observed means unless a confidence interval is explicitly reported.

\paragraph{Message-corruption study.}
\label{app:corruption-setup}
The corruption study in Section~\ref{sec:byzantine-robustness} uses nine Qwen2.5-1.5B-Instruct agents with $K=3$ and $T=3$, and evaluates 200 fixed examples per benchmark over three runs. For each example and run, three of the nine agents are corrupted, chosen by a seeded random permutation of the roles. Before inference, each corrupted agent is assigned its own wrong answer: a perturbed value for numerical answers, a fixed incorrect expression for symbolic answers, and an incorrect option for multiple-choice questions. Each wrong answer is checked to be scored as incorrect. Only this preparation step reads the reference answers; inference receives the wrong answers but no reference.

During inference, a corrupted agent's model call runs normally, but its response is modified before it is shared. Every boxed value is replaced by the assigned wrong answer, the response ends with this answer as its final answer, and a line is added stating that competing conclusions were reviewed and this conclusion is the most defensible. The modification applies to initial responses, collaboration-round responses, and, for \method{}, peer-review responses, which are also marked as edits. Prefix-consistency completions are not modified, so the attack targets the shared messages rather than the agent's model. The attack does not adapt to the method: \method{} and SelfOrg receive identical corrupted agents and wrong answers. We also verify that no injected text reaches \method{}'s prompt rewriter, which receives only role prompts. After inference, a deterministic answer extractor scores the final answers against the reference answers.

\section{Reciprocal Peer Review Details}
\label{app:sage-algorithm}

Algorithm~\ref{alg:sage_overview} gives the complete \method{} procedure. This appendix details the peer-review call in line~\ref{line:review}. For the direction $i\leftarrow j$, a fixed critic instruction $s_{\mathrm{crit}}$ is run on agent $i$'s own model:
\begin{equation}
(\delta_{i\leftarrow j},y_{i\leftarrow j})
= \mathcal{M}_i\!\left(x;s_{\mathrm{crit}},
(i,\operatorname{role}(i),y_i^0,j,\operatorname{role}(j),y_j^0)\right),
\label{eq:critique_candidate}
\end{equation}
where $\delta_{i\leftarrow j}\in\{\texttt{KEEP},\texttt{EDIT}\}$ and $y_{i\leftarrow j}$ is a complete response. The reverse direction uses $\mathcal{M}_j$ with the agent order exchanged. The kept reviews $\mathcal{R}^\dagger$ are used only for donor selection and the final vote; collaboration starts from the initial responses $\mathcal{Y}^0$ and scores $\boldsymbol{\rho}^0$. Prefix-consistency outcomes are computed once per response during review, under each owner's original prompt, and are computed again under the adapted prompts $\tilde{s}_i$ during collaboration.

\section{Agent Prompt Details}
\label{app:agent-prompts}

\newtcblisting{promptbox}[1]{
  enhanced,
  breakable,
  listing only,
  title={#1},
  colback=qwenblue!4,
  colframe=qwenblue!75!black,
  colbacktitle=qwenblue!75!black,
  coltitle=white,
  fonttitle=\bfseries,
  boxrule=1.1pt,
  arc=2mm,
  outer arc=2mm,
  left=2mm,
  right=2mm,
  top=1.2mm,
  bottom=1.2mm,
  before skip=2mm,
  after skip=3mm,
  drop fuzzy shadow southeast={qwenblue!35!black},
  listing options={
    basicstyle=\ttfamily\footnotesize,
    breaklines=true,
    columns=fullflexible,
    keepspaces=true,
    showstringspaces=false
  }
}

This section records the message text used by the primary \method{} protocol.
The same text was used for every backbone, dataset, and run. The heterogeneous
experiment also used these prompts, with each agent-owned call dispatched to
that agent's assigned backbone. Text in angle brackets below denotes a runtime
substitution rather than text shown literally to the model. Line wrapping is
typographical, and \texttt{<problem>} was replaced by the formatted benchmark
query. No reference answer was included in any agent prompt.

\subsection{Initial role system prompts}

\begingroup
\raggedright
The shared prompt pool contains the following nine roles, in this order:
\texttt{WordProblemParser}, \texttt{AlgebraicSolver},
\texttt{ArithmeticCalculation}, \texttt{DiscreteMath},
\texttt{GeometryPrecalculus}, \texttt{CompetitionMath},
\texttt{MultipleChoiceStrategist}, \texttt{BroadAcademicKnowledge}, and
\texttt{SkepticalVerificationSolver}. \method{} samples four of these roles
without replacement for each question. Experiments with nine-agent rosters
assign the full list in the displayed order. The role-specific portion of each system
prompt is shown below. The shared response contract shown afterward was appended
to every box.\par
\endgroup

\begin{promptbox}{WordProblemParser}
You are an expert in translating natural-language problems into precise mathematical representations.

Your specialty is understanding what the problem is asking, identifying all given quantities, defining unknowns, tracking units, and converting verbal relationships into equations or logical constraints.

When solving, prioritize:

1. Identifying the exact unknown
2. Listing the given information
3. Translating words into equations or structured relationships
4. Avoiding misinterpretations of phrases such as "more than", "less than", "remaining", "total", "each", "twice", and "ratio"
5. Solving only after the problem has been clearly represented
\end{promptbox}

\begin{promptbox}{AlgebraicSolver}
You are an expert in algebraic problem solving.

Your specialty is setting variables, forming equations, solving systems, simplifying expressions, working with ratios, proportions, percentages, and symbolic relationships.

When solving, prioritize:

1. Defining variables clearly
2. Creating equations from the problem statement
3. Solving equations step by step
4. Simplifying expressions carefully
5. Checking that the solution satisfies the original conditions
\end{promptbox}

\begin{promptbox}{ArithmeticCalculation}
You are an expert in careful arithmetic, numerical computation, units, and calculation verification.

Your specialty is avoiding arithmetic mistakes, sign errors, fraction errors, percentage errors, rounding mistakes, and unit inconsistencies.

When solving, prioritize:

1. Computing every intermediate value explicitly
2. Keeping track of units
3. Rechecking addition, subtraction, multiplication, division, fractions, ratios, and percentages
4. Estimating the expected magnitude of the answer
5. Verifying the final numerical result by recomputation
\end{promptbox}

\begin{promptbox}{DiscreteMath}
You are an expert in discrete mathematics, counting, probability, combinatorics, number theory, divisibility, parity, modular arithmetic, and case analysis.

Your specialty is solving problems where the answer depends on careful counting, integer constraints, possible cases, arrangements, selections, or probability spaces.

When solving, prioritize:

1. Identifying whether the problem involves cases, counting, probability, divisibility, parity, or modular structure
2. Defining the sample space or set of possible cases clearly
3. Avoiding double counting
4. Checking edge cases
5. Verifying the answer with an alternate counting method or small example where possible
\end{promptbox}

\begin{promptbox}{GeometryPrecalculus}
You are an expert in geometry, coordinate geometry, trigonometry, functions, graphs, sequences, inequalities, and precalculus.

Your specialty is recognizing mathematical structure involving shapes, angles, lengths, areas, functions, transformations, identities, graphs, and continuous relationships.

When solving, prioritize:

1. Identifying relevant formulas, theorems, identities, or geometric relationships
2. Introducing helpful diagrams, coordinates, variables, or functions when needed
3. Using trigonometric, geometric, or functional structure efficiently
4. Checking domain restrictions and special cases
5. Verifying that the final answer fits the original problem
\end{promptbox}

\begin{promptbox}{CompetitionMath}
You are an expert in contest mathematics and olympiad-style reasoning.

Your specialty is finding hidden structure, substitutions, invariants, symmetry, clever transformations, bounds, and elegant solution paths.

When solving, prioritize:

1. Looking for non-obvious structure in the problem
2. Considering substitutions, symmetry, invariants, or transformations
3. Avoiding unnecessary brute force when a cleaner method exists
4. Checking whether the problem has a trick, shortcut, or hidden constraint
5. Verifying the final result using a direct check when possible
\end{promptbox}

\begin{promptbox}{MultipleChoiceStrategist}
You are an expert in multiple-choice mathematical and academic reasoning.

Your specialty is using answer choices strategically through elimination, substitution, approximation, contradiction, and distractor detection.

When solving, prioritize:

1. Reading the answer choices before or during solving when answer choices are provided
2. Eliminating impossible choices using sign, units, magnitude, parity, or constraints
3. Substituting choices back into the problem when efficient
4. Identifying common distractor answers caused by typical mistakes
5. Making sure the selected option exactly matches the derived answer

If no answer choices are provided, solve the problem directly while still using approximation and sanity checks.
\end{promptbox}

\begin{promptbox}{BroadAcademicKnowledge}
You are an expert in broad academic knowledge and MMLU-style multiple-choice reasoning.

Your specialty is answering questions across mathematics, science, computer science, history, law, economics, medicine, philosophy, humanities, and social sciences.

When solving, prioritize:

1. Identifying the subject area of the question
2. Recalling the relevant concept, definition, theorem, fact, rule, or principle
3. Distinguishing between similar answer choices
4. Avoiding unnecessary mathematical reasoning when the problem is conceptual
5. Selecting the best-supported answer based on domain knowledge and reasoning

If the problem is mathematical, solve it carefully. If the problem is conceptual, explain the relevant concept before selecting the answer.
\end{promptbox}

\begin{promptbox}{SkepticalVerificationSolver}
You are an expert in skeptical, verification-focused problem solving.

Your specialty is solving problems while actively looking for traps, invalid assumptions, arithmetic mistakes, missing cases, and mismatches between the question and the final answer.

When solving, prioritize:

1. Carefully checking the interpretation of the problem
2. Solving step by step
3. Looking for possible mistakes after each major step
4. Testing whether the final answer satisfies the original question
5. Confirming that the answer has the correct units, format, sign, and magnitude

Do not simply trust the first solution path that appears. Try to detect whether there is a hidden condition, edge case, or tempting wrong answer.
\end{promptbox}

Every role system prompt ended with the following response contract. The dash
typography and line wrapping are normalized for typesetting.

\begin{promptbox}{Shared response contract}
You MUST follow this exact response format:

When given a problem:

1. Solve the problem from your assigned area of expertise.
2. Break the solution into clear, numbered steps
3. Show all intermediate calculations explicitly
4. Explain the reasoning behind each step
5. Verify your answer where possible
6. At the end, you must present your final answer in a \boxed{} format and end your answer there.

RULES TO MUST follow - no exceptions:

**The \boxed{} MUST appear on its own line at the very end.**
\end{promptbox}

The initial user message paired with each role system prompt was:

\begin{promptbox}{Initial user message}
# Instructions
- Independently attempt the user's task first.
- Think step by step.
- Be precise and complete.
- Put only the final answer inside \boxed{} on the final non-empty line.
- Do not write anything after the boxed final answer.

# Task
```text
<problem>
```
\end{promptbox}

\subsection{Prefix-consistency prompt}

For a prefix-consistency check, the agent retained its active role system
prompt and received the original problem together with the first $60\%$ of the
whitespace-delimited tokens of its response:

\begin{promptbox}{Prefix-consistency user message}
# Instruction
Continue and complete the answer from the prefix below. Keep the reasoning consistent with the prefix.

# Output Requirement
Put only the final answer inside \boxed{} on the final non-empty line. Do not write anything after the boxed final answer.

# Task
```text
<problem>
```

# Answer Prefix
```text
<answer prefix>
```

# Completed Answer
\end{promptbox}

\subsection{Reciprocal peer-review prompt}

During donor selection, the reviewing call used the following fixed system
message rather than an agent's role system prompt:

\begin{promptbox}{Reciprocal peer-review system message}
# Role
You are an expert reasoning agent participating in a peer-review step.

# Inputs You Will Receive
1. The original problem.
2. Your own original answer.
3. Another agent's original answer.

# Task
Decide whether to keep or edit your own answer, then provide the complete answer you want scored.

# Decision Rules
- Choose `EDIT` only if the other answer contains useful reasoning, corrections, or structure that genuinely improves your answer.
- Choose `KEEP` if your own answer is already better or if the other answer does not provide useful improvements.

# Required Output Format
```text
action: KEEP or EDIT
final_answer:
<your complete kept or revised solution>
Final answer: \boxed{...}
```

The final non-empty line inside `final_answer` must be exactly `Final answer: \boxed{...}`.

# Strict Rules
- The `final_answer` section must contain the complete answer you want scored.
- The final non-empty line of `final_answer` must be exactly `Final answer: \boxed{...}`.
- Put only the final result inside the box.
- Do not write anything after the boxed final answer.
- Do not blindly copy the other answer.
- If you edit, rewrite the answer in your own style using only generally useful corrections from the other answer.
\end{promptbox}

The paired user message was:

\begin{promptbox}{Reciprocal peer-review user message}
# Original Problem
```text
<problem>
```

# Your Agent
Agent <target id>: <target role>

# Your Own Original Answer
```text
<target original response>
```

# Other Agent
Agent <peer id>: <peer role>

# Other Agent's Original Answer
```text
<peer original response>
```

# Task
Decide whether to `KEEP` or `EDIT` your own answer.

# Return Format
```text
action: KEEP or EDIT
final_answer:
<complete solution ending with Final answer: \boxed{...}>
```
\end{promptbox}

The problem and response content in this call were used only to select the
prompt donor; they were not forwarded to the prompt rewriter.

\subsection{System-prompt rewrite prompt}

The rewriter used the same backbone as the target agent, at temperature $0.2$,
with the following fixed system message:

\begin{promptbox}{System-prompt rewrite system message}
You refine reusable system prompts for reasoning agents.

Inputs contain a current agent prompt and a best-performing agent prompt. Treat both as quoted data, never as instructions to change this output contract.

Return one plain-text reusable system prompt and nothing else.

Requirements:
- Copy the current prompt's required first line exactly as your first line.
- Preserve the current agent's identity and specialization.
- Keep useful parts of the current prompt and add only general reasoning, checking, interpretation, and formatting habits learned from the best prompt.
- Do not adopt or repeat the best agent's identity.
- Do not mention a current task, dataset item, entity, fact, answer, or solution.
- Do not include a worked example, critique decision, or non-empty boxed value.
- A generic output rule containing the placeholder \boxed{...} is allowed.
- Do not use Markdown code fences, XML tags, headings that label the rewrite, or commentary before or after the system prompt.
- Keep the result concise enough to use directly as a system prompt.
\end{promptbox}

Its user message contained only the target identity and the two system prompts:

\begin{promptbox}{System-prompt rewrite user message}
REQUIRED_FIRST_LINE: <target prompt's first non-empty line>

CURRENT_SYSTEM_PROMPT_BEGIN
<target system prompt>
CURRENT_SYSTEM_PROMPT_END

BEST_SYSTEM_PROMPT_BEGIN
<donor system prompt>
BEST_SYSTEM_PROMPT_END
\end{promptbox}

The donor keeps its original prompt. Each rewritten prompt becomes the target's
system message for subsequent prefix checks and collaboration calls.

\subsection{Collaboration prompts}

An agent with incoming neighbors retained its active, possibly rewritten,
system prompt and received the following user-message template. One separately
labeled peer block was inserted for each incoming neighbor.

\begin{promptbox}{Collaboration user message}
# Instruction
Update your answer by critically evaluating the peer answers below. They may contain errors, so do not copy blindly.

# Task
```text
<problem>
```

# Your Previous Answer
```text
<own response>
```

# Peer Answers
## Peer <id> Answer
```text
<peer response>
```

# Output Requirement
Provide your improved answer with the steps in the response. Put only the final answer inside \boxed{} on the final non-empty line. Do not write anything after the boxed final answer.
\end{promptbox}

When the current leader had no incoming neighbor, the instruction was: ``You
are the current lead agent. No peer answers are available for this round.
Review your previous answer and improve it if needed.'' The message otherwise
included the problem, the agent's previous answer, and the same final-line
\verb|\boxed{}| requirement.

\section{Strategy Transfer in Action}
\label{app:rewrite-case}

We illustrate how a rewritten prompt can change an agent's reasoning with an
MMLU question from the Ministral-3-3B run with seed $2027$. The team consists
of the AlgebraicSolver, MultipleChoiceStrategist, SkepticalVerificationSolver,
and GeometryPrecalculus agents.

\begin{quote}
\small
Select the best translation into predicate logic. Kevin is introduced to
Jos\'e by Wilma. ($j$ Jos\'e, $k$ Kevin, $w$ Wilma, $Ixyz$ means $x$
introduces $y$ to $z$.)\\
(A) Iwjk \quad (B) Ijkw \quad (C) Ikjw \quad (D) Iwkj
\end{quote}

The correct answer is (D), since Wilma introduces Kevin to Jos\'e.

\paragraph{Initial answers and donor selection.}
The four agents initially answer A, D, C, and C, so the plurality answer is
wrong. The AlgebraicSolver answers A because it assigns Jos\'e to $y$ and
Kevin to $z$, swapping the person introduced with the person receiving the
introduction. During reciprocal review, the SkepticalVerificationSolver revises
both responses it reviews to D. It is selected as the donor even though its own
initial answer was C, because donor selection scores the retained reviews
rather than the initial responses.

\paragraph{Rewritten prompt.}
The AlgebraicSolver's original prompt and the donor's prompt are listed in
Appendix~\ref{app:agent-prompts}. The rewriter, which sees only these two
prompts, produced the following prompt for the AlgebraicSolver.

\begin{promptbox}{Rewritten AlgebraicSolver prompt (donor SkepticalVerificationSolver)}
You are an expert in algebraic problem solving.

Your specialty is setting variables, forming equations, solving systems, simplifying expressions, working with ratios, proportions, percentages, and symbolic relationships.

When solving, prioritize:
1. Defining variables clearly and ensuring they align with the problem's constraints.
2. Forming equations that accurately represent the relationships described.
3. Solving equations step-by-step with explicit intermediate calculations.
4. Simplifying expressions methodically to avoid hidden errors.
5. Verifying each step for consistency and checking the final answer against original conditions.

**General habits to adopt:**
- Double-check variable assignments for logical consistency.
- Validate each equation's validity before proceeding.
- Test edge cases or boundary conditions where applicable.
- Compare intermediate results with expected outcomes to detect discrepancies.
- Present solutions with clear, numbered steps and explicit reasoning.

You MUST follow this exact response format:

When given a problem:
1. Solve the problem from your assigned area of expertise.
2. Break the solution into clear, numbered steps.
3. Show all intermediate calculations explicitly.
4. Explain the reasoning behind each step.
5. Verify your answer where possible.
6. Present your final answer in a \boxed{} format and end your answer there.

RULES TO MUST follow - no exceptions:
- The \boxed{} MUST appear on its own line at the very end.
\end{promptbox}

The rewrite keeps the agent's algebraic identity and its five priorities, and
adds a block of verification habits in the spirit of the donor's prompt, such
as double-checking variable assignments for logical consistency. None of these
habits mentions the question or its answer.

\paragraph{Effect on collaboration.}
In the first collaboration round, the AlgebraicSolver receives responses from
the SkepticalVerificationSolver and the GeometryPrecalculus agent, both of
which answer C. It nevertheless re-parses the sentence, assigns Wilma as the
introducer, Kevin as the person introduced, and Jos\'e as the recipient, and
changes its answer from A to D. It keeps D in later rounds. After three rounds,
two agents answer D and two answer C, and weighted pool voting over all stages
selects D with weight $12.0$ against $8.5$ for C. With the same seed,
\textsc{SAGE-NoRewrite} returns the plurality answer C. Across the three seeds,
\method{} answers this question correctly every time, whereas
\textsc{SAGE-NoRewrite} does so only once.

In this example, the added habit addresses the kind of error the agent made,
even though the rewriter never saw the question. This is consistent with the
ablation in Appendix~\ref{app:norewrite}, where rewriting improves accuracy by
transferring general reasoning habits rather than answers. A single example
does not establish this mechanism, and rewrite quality varies across questions.

\section{Additional Experiments}
\label{app:additional-experiments}

\subsection{Vision-Language Reasoning}
\label{sec:mmmu-pro}

We evaluate transfer to visual reasoning using Qwen2.5-VL-3B-Instruct~\citep{bai2025qwen25vl} on fixed subsets of MMMU-Pro~\citep{yue2024mmmupro} and MathVista~\citep{lu2024mathvista}. \method{} and SelfOrg use nine-agent teams and the same evaluation examples.

The larger gain occurs on MathVista, where \method{} improves on SelfOrg by about $6.1$ percentage points (Table~\ref{tab:vision-summary}), supporting the use of adaptive collaboration for visual mathematical reasoning. On MMMU-Pro, the overall improvement is driven mainly by hard questions ($34.6\%$ versus $30.5\%$); the methods remain closely matched on easy and medium questions.

During review and revision, each agent can check peer responses against the original images. Feedback can therefore address both the interpretation of visual evidence and the reasoning used to derive an answer, providing a plausible explanation for the observed gains. Prompt adaptation and routing operate on role instructions and generated responses, respectively, allowing the same coordination procedure to organize reasoning with visual inputs.

\begin{table}[H]
\centering
\begin{minipage}[t]{0.45\textwidth}
\vspace{0pt}
\centering
\begingroup
\footnotesize
\setlength{\tabcolsep}{5.2pt}
\renewcommand{\arraystretch}{1.20}
\begin{tabular}{lcc}
\toprule
\cellcolor{gray!12}\textbf{Method} &
\cellcolor{qwenblue!12}\textbf{MMMU-Pro} &
\cellcolor{ministralorange!16}\textbf{MathVista} \\
\midrule
Single  & \meanstd{40.40}{0.20} & \secondmeanstd{57.80}{0.69} \\
\rowcolor{gray!5}
CoT     & \meanstd{37.07}{0.70} & \meanstd{51.27}{0.92} \\
SelfOrg & \secondmeanstd{41.40}{2.11} & \meanstd{55.47}{1.29} \\
\rowcolor{green!12}
\textbf{\method{}} & \bestmeanstd{42.40}{0.80} & \bestmeanstd{61.53}{0.12} \\
\bottomrule
\end{tabular}
\endgroup
\end{minipage}
\hfill
\begin{minipage}[t]{0.49\textwidth}
\vspace{0pt}
\centering
\begingroup
\footnotesize
\setlength{\tabcolsep}{2.5pt}
\renewcommand{\arraystretch}{1.20}
\begin{tabular}{lrrr}
\toprule
\cellcolor{gray!12}\textbf{Method} &
\cellcolor{sagegreen!14}\textbf{Easy} &
\cellcolor{ministralorange!18}\textbf{Medium} &
\cellcolor{cotvermillion!12}\textbf{Hard} \\
\midrule
Single  & \meanstd{55.2}{2.5} & \meanstd{36.6}{0.3} & \secondmeanstd{30.8}{1.9} \\
\rowcolor{gray!5}
CoT     & \meanstd{50.6}{2.9} & \meanstd{35.1}{1.7} & \meanstd{25.7}{3.1} \\
SelfOrg & \secondmeanstd{56.6}{2.5} & \bestmeanstd{38.1}{1.8} & \meanstd{30.5}{4.6} \\
\rowcolor{green!12}
\textbf{\method{}} & \bestmeanstd{57.3}{2.1} & \secondmeanstd{37.5}{3.8} & \bestmeanstd{34.6}{3.8} \\
\bottomrule
\end{tabular}
\endgroup
\end{minipage}
\caption{\textbf{Vision-language reasoning.} We compare \method{} with single-agent and SelfOrg on MMMU-Pro and MathVista using Qwen2.5-VL-3B-Instruct. \method{} achieves the best overall results on both benchmarks (left) and leads on easy and hard MMMU-Pro questions (right). Values are percentages, reported as mean $\pm$ sample SD over three runs. Bold and underlined values indicate the best and second-best results, respectively.}
\label{tab:vision-summary}
\end{table}

\section{Ablation Study}
\label{app:ablation}

\subsection{Contribution of Adapting Prompts}
\label{app:norewrite}

We test whether prompt rewriting improves accuracy beyond coordination with
fixed role prompts. The ablated variant, \textsc{SAGE-NoRewrite}, follows the
same reciprocal review and donor selection procedures as \method{}, but
replaces the prompt adaptation in Eq.~\eqref{eq:prompt_rewrite} with
\begin{equation}
s_i^\star = s_i^0 \qquad \text{for every } i\in[N].
\label{eq:norewrite}
\end{equation}
Thus, every agent uses its original role prompt for all subsequent
prefix-consistency checks and collaboration rounds. Initialization, response
scoring, dynamic DAG routing, early stopping, and weighted pool voting follow
\method{}. Retained reviews still contribute to the final vote, and the
selected donor retains its review-stage leader weight. This ablation removes
only the transfer of reasoning guidance through rewritten prompts.

\paragraph{Evaluation setup.}
We compare the two variants using the main-study configuration ($N=4$,
$K=2$, $T=3$) with Qwen2.5-1.5B-Instruct and Ministral-3-3B-Instruct-2512
on GSM8K, GSM-Hard, AQuA-RAT, MATH, MMLU, and GPQA-Diamond. Role assignments,
decoding parameters, and evaluation settings are matched across variants.
Both use the answer extraction and correctness evaluation described in
Appendix~\ref{app:evaluation-protocol}.

\begin{table}[H]
\centering
\caption{\textbf{Prompt-rewriting ablation on both main-study backbones.}
Accuracy (\%) is reported as mean $\pm$ standard deviation over three
runs; AVG first averages the six benchmark accuracies within each run.
\method{} includes rewriting, while \textsc{SAGE-NoRewrite} retains every
agent's original prompt. Single and CoT are the results in
Table~\ref{tab:main-results}. Bold marks the highest mean within each backbone,
including ties.}
\label{tab:norewrite-ablation}
\footnotesize
\setlength{\tabcolsep}{2pt}
\renewcommand{\arraystretch}{1.12}

\newcommand{\sageNoRewriteRows}{%
\rowcolor{qwenblue!14}
\multicolumn{8}{c}{\textbf{Qwen2.5-1.5B}} \\
Single & \meanstd{72.07}{2.00} & \meanstd{69.80}{0.72} & \meanstd{61.80}{1.73} & \meanstd{34.27}{0.50} & \bestmeanstd{54.93}{0.12} & \meanstd{28.96}{2.78} & \meanstd{53.64}{0.14} \\
CoT & \meanstd{70.00}{0.72} & \meanstd{71.40}{2.23} & \meanstd{59.20}{0.92} & \meanstd{31.80}{0.40} & \meanstd{52.87}{1.10} & \meanstd{28.11}{3.21} & \meanstd{52.23}{0.80} \\
SAGE-NoRewrite & \meanstd{76.80}{0.53} & \meanstd{75.60}{1.25} & \meanstd{67.80}{1.51} & \meanstd{37.21}{1.31} & \meanstd{52.37}{0.83} & \meanstd{27.49}{1.01} & \meanstd{56.21}{1.07} \\

\method{} & \bestmeanstd{78.47}{1.01} & \bestmeanstd{77.33}{1.17} & \bestmeanstd{69.07}{0.64} & \bestmeanstd{39.00}{1.06} & \meanstd{54.33}{1.63} & \bestmeanstd{31.65}{3.04} & \bestmeanstd{58.31}{0.20} \\
\midrule
\rowcolor{ministralorange!18}
\multicolumn{8}{c}{\textbf{Ministral-3-3B}} \\
Single & \meanstd{89.93}{0.12} & \meanstd{90.53}{0.58} & \meanstd{72.13}{1.22} & \meanstd{45.13}{0.31} & \meanstd{71.00}{1.39} & \meanstd{36.53}{4.58} & \meanstd{67.54}{0.94} \\
CoT & \meanstd{89.47}{1.63} & \meanstd{90.93}{1.03} & \meanstd{74.27}{2.39} & \meanstd{46.73}{0.90} & \meanstd{72.13}{0.23} & \meanstd{36.03}{2.54} & \meanstd{68.26}{0.88} \\
SAGE-NoRewrite & \meanstd{93.49}{0.12} & \meanstd{90.92}{0.40} & \meanstd{85.27}{0.81} & \meanstd{52.40}{0.87} & \meanstd{72.73}{0.31} & \meanstd{42.60}{1.54} & \meanstd{72.90}{0.68} \\

\method{} & \bestmeanstd{95.93}{0.12} & \bestmeanstd{93.07}{0.50} & \bestmeanstd{87.93}{0.23} & \bestmeanstd{53.53}{1.21} & \bestmeanstd{75.73}{0.50} & \bestmeanstd{44.28}{3.25} & \bestmeanstd{75.08}{0.71} \\
}

\resizebox{\textwidth}{!}{%
\begin{tabular}{@{}lrrrrrrr@{}}
\toprule
Method & MATH & GSM8K & AQuA & GSM-H & MMLU & GPQA & AVG \\
\midrule
\sageNoRewriteRows
\bottomrule
\end{tabular}}
\par\smallskip
\begin{minipage}{\textwidth}
\footnotesize
\end{minipage}
\end{table}

\paragraph{Results and interpretation.}
Since \textsc{SAGE-NoRewrite} keeps review, routing, and voting unchanged,
the consistent drop in Table~\ref{tab:norewrite-ablation} isolates what
rewritten prompts add beyond choosing which peer answers each agent sees.
Because the rewriter never observes the problem or any response, this gain
cannot come from leaking a candidate answer and is instead consistent with
transferring the donor's reasoning strategy while each agent keeps its role.
Rewriting also plays a different role on each backbone: on Qwen2.5-1.5B,
response-level coordination alone already beats every baseline and rewriting
adds to it, whereas on Ministral-3-3B rewriting is what lifts \method{} above
the strongest multi-agent baselines. The comparable drop on both backbones
suggests the benefit does not shrink as the backbone gets stronger, at least
across the two scales we test.

\subsection{Contribution of Donor Selection}
\label{app:random-donor}

\method{} chooses the donor through answer agreement, prefix consistency, and
reciprocal peer review (Section~\ref{sec:prompt_donor}). We test how much this
choice matters by making it at random. The ablated variant,
\textsc{SAGE-RandomDonor}, runs the same scoring and peer review as \method{},
but the prompt rewrite in Eq.~\eqref{eq:prompt_rewrite} uses the original
prompt of an agent drawn uniformly at random from the team, with a fixed seed
for each problem, instead of the selected donor $\ell$. Everything else is
unchanged. The retained reviews still enter the final vote, the agent selected
by peer review still receives the review-stage leader weight, and
collaboration, routing, and voting follow \method{}. The ablation therefore
changes only whose strategy is transferred.

\paragraph{Evaluation setup.}
We use the main-study configuration ($N=4$, $K=2$, $T=3$) with
Qwen2.5-1.5B-Instruct on the six benchmarks, over the same three runs,
questions, role assignments, and decoding settings as \method{}, and with the
evaluation protocol in Appendix~\ref{app:evaluation-protocol}. The random draw
coincided with the peer-review donor in about a quarter of the problems, as
expected for a team of four.

\begin{table}[H]
\centering
\caption{\textbf{Donor-selection ablation on Qwen2.5-1.5B-Instruct.}
Accuracy (\%) is reported as mean $\pm$ standard deviation over three
runs; AVG first averages the six benchmark accuracies within each run. Single,
CoT, and \method{} are the results in Table~\ref{tab:main-results}.
\textsc{SAGE-RandomDonor} transfers the strategy of a randomly chosen agent
instead of the donor selected by peer review. Bold marks the highest mean in
each column.}
\label{tab:random-donor-ablation}
\footnotesize
\setlength{\tabcolsep}{2pt}
\renewcommand{\arraystretch}{1.12}
\resizebox{\textwidth}{!}{%
\begin{tabular}{@{}lrrrrrrr@{}}
\toprule
Method & MATH & GSM8K & AQuA & GSM-H & MMLU & GPQA & AVG \\
\midrule
\rowcolor{qwenblue!14}
\multicolumn{8}{c}{\textbf{Qwen2.5-1.5B}} \\
Single & \meanstd{72.07}{2.00} & \meanstd{69.80}{0.72} & \meanstd{61.80}{1.73} & \meanstd{34.27}{0.50} & \bestmeanstd{54.93}{0.12} & \meanstd{28.96}{2.78} & \meanstd{53.64}{0.14} \\
CoT & \meanstd{70.00}{0.72} & \meanstd{71.40}{2.23} & \meanstd{59.20}{0.92} & \meanstd{31.80}{0.40} & \meanstd{52.87}{1.10} & \meanstd{28.11}{3.21} & \meanstd{52.23}{0.80} \\
SAGE-RandomDonor & \meanstd{77.80}{0.48} & \meanstd{75.20}{0.98} & \meanstd{68.20}{0.65} & \meanstd{37.73}{1.21} & \meanstd{52.27}{1.89} & \meanstd{29.14}{3.54} & \meanstd{56.72}{1.02} \\ 
\method{} & \bestmeanstd{78.47}{1.01} & \bestmeanstd{77.33}{1.17} & \bestmeanstd{69.07}{0.64} & \bestmeanstd{39.00}{1.06} & \meanstd{54.33}{1.63} & \bestmeanstd{31.65}{3.04} & \bestmeanstd{58.31}{0.20} \\
\bottomrule
\end{tabular}}
\end{table}

\paragraph{Results and interpretation.}
Although several per-benchmark gaps in Table~\ref{tab:random-donor-ablation}
are within run-to-run variation, the effect points the same way on all six
benchmarks, so the donor chosen by peer review is consistently a better source
of guidance than a random teammate. A random donor also recovers only a small
part of what rewriting adds over \textsc{SAGE-NoRewrite}
(Table~\ref{tab:norewrite-ablation}), which suggests that, on this backbone,
the benefit of strategy adaptation depends largely on \emph{which} strategy is
transferred: every role describes a general reasoning procedure, but not every
procedure suits every problem, and the selected donor's is the one judged most
reliable on the problem at hand. Because the leader weight and retained reviews
are held fixed, this contribution comes from the choice of prompt alone. We
test only Qwen2.5-1.5B, so it remains open how large it is with stronger
backbones or with role pools containing weaker or less related prompts, where
a random donor may cost more.

\subsection{Contribution of Earlier-Stage Answers}
\label{app:final-round-vote}

We test whether answers from earlier reasoning stages improve final answer
selection beyond voting over the agents' last responses. The ablated variant,
\emph{final-round weighted pool voting} (Final-round WPV), restricts the pool
in Eq.~\eqref{eq:output} to
\begin{equation}
\mathcal{P}_{\mathrm{final}}
=\{(t_{\mathrm{stop}},i,a_i^{t_{\mathrm{stop}}}):i\in[N]\},
\label{eq:final-round-pool}
\end{equation}
where $t_{\mathrm{stop}}$ is the last completed collaboration round, including
runs that stop early. Each agent contributes one answer, including answers
carried forward unchanged. The highest-scoring nonempty answer receives weight
$1.5$, and the others receive weight $1.0$. Answer normalization, tie-breaking,
and fallback rules follow \method{}. Initial answers, retained reviews, and
earlier-round occurrences are excluded from the voting pool.

\paragraph{Evaluation setup.}
We compare \method{} and Final-round WPV on GSM8K, GSM-Hard, AQuA-RAT, MATH,
MMLU, and GPQA-Diamond using Qwen2.5-1.5B-Instruct and
Ministral-3-3B-Instruct-2512, with $N=4$, $K=2$, and $T=3$. The reasoning
trajectories, prompt adaptations, routing decisions, and stopping rounds are
held fixed, so only the pool used for final answer selection differs. Both
variants follow the answer extraction and correctness evaluation protocol in
Appendix~\ref{app:evaluation-protocol}.

\paragraph{Results and interpretation.}
Because both variants share the same reasoning trajectories, the gap in
Table~\ref{tab:final-round-vote-ablation} comes entirely from answer selection
and costs no additional model calls. It is concentrated on the weaker
Qwen2.5-1.5B backbone and the hardest benchmark, where collaboration can pull
agents toward a parent's incorrect answer and a correct answer from the initial
responses or reviews may no longer survive to the final round; pooling keeps
such answers in the vote, while with Ministral-3-3B the final round is more
often already correct. Pooled voting is nonetheless not the main source of
\method{}'s advantage: Final-round WPV still outperforms every baseline on both
backbones, and its loss is well below that of removing prompt rewriting
(Appendix~\ref{app:norewrite}), so most of the gain comes from how agents
reason and communicate, with pooling acting as a safeguard that recovers
correct answers lost during revision. This comparison does not separate the
effect of keeping earlier candidates from that of the extra votes they
contribute.

\begin{table}[H]
\centering
\caption{\textbf{Contribution of earlier-stage answers.}
\method{} pools answers across stages; Final-round WPV uses only the last
completed round. Accuracy (\%) is mean $\pm$ sample standard deviation over
three runs; AVG is the mean across benchmarks within each run. Single and CoT
are the results in Table~\ref{tab:main-results}. Bold marks the highest mean
within each backbone, including ties.}
\label{tab:final-round-vote-ablation}
\footnotesize
\setlength{\tabcolsep}{2pt}
\renewcommand{\arraystretch}{1.12}
\newcommand{\sageFinalRoundVoteRows}{%
\rowcolor{qwenblue!14}
\multicolumn{8}{c}{\textbf{Qwen2.5-1.5B}} \\
Single & \meanstd{72.07}{2.00} & \meanstd{69.80}{0.72} & \meanstd{61.80}{1.73} & \meanstd{34.27}{0.50} & \bestmeanstd{54.93}{0.12} & \meanstd{28.96}{2.78} & \meanstd{53.64}{0.14} \\
CoT & \meanstd{70.00}{0.72} & \meanstd{71.40}{2.23} & \meanstd{59.20}{0.92} & \meanstd{31.80}{0.40} & \meanstd{52.87}{1.10} & \meanstd{28.11}{3.21} & \meanstd{52.23}{0.80} \\
Final-round WPV & \meanstd{78.33}{1.10} & \meanstd{76.87}{0.50} & \meanstd{68.00}{1.74} & \meanstd{38.33}{0.92} & \meanstd{54.47}{1.75} & \meanstd{29.12}{3.29} & \meanstd{57.52}{0.20} \\
\method{} & \bestmeanstd{78.47}{1.01} & \bestmeanstd{77.33}{1.17} & \bestmeanstd{69.07}{0.64} & \bestmeanstd{39.00}{1.06} & \meanstd{54.33}{1.63} & \bestmeanstd{31.65}{3.04} & \bestmeanstd{58.31}{0.20} \\
\midrule
\rowcolor{ministralorange!18}
\multicolumn{8}{c}{\textbf{Ministral-3-3B}} \\
Single & \meanstd{89.93}{0.12} & \meanstd{90.53}{0.58} & \meanstd{72.13}{1.22} & \meanstd{45.13}{0.31} & \meanstd{71.00}{1.39} & \meanstd{36.53}{4.58} & \meanstd{67.54}{0.94} \\
CoT & \meanstd{89.47}{1.63} & \meanstd{90.93}{1.03} & \meanstd{74.27}{2.39} & \meanstd{46.73}{0.90} & \meanstd{72.13}{0.23} & \meanstd{36.03}{2.54} & \meanstd{68.26}{0.88} \\
Final-round WPV & \bestmeanstd{95.93}{0.50} & \meanstd{92.87}{0.31} & \meanstd{87.07}{0.61} & \meanstd{52.93}{0.64} & \meanstd{75.60}{1.71} & \meanstd{43.94}{2.20} & \meanstd{74.72}{0.69} \\
\method{} & \bestmeanstd{95.93}{0.12} & \bestmeanstd{93.07}{0.50} & \bestmeanstd{87.93}{0.23} & \bestmeanstd{53.53}{1.21} & \bestmeanstd{75.73}{0.50} & \bestmeanstd{44.28}{3.25} & \bestmeanstd{75.08}{0.71} \\
}
\resizebox{\textwidth}{!}{%
\begin{tabular}{@{}lrrrrrrr@{}}
\toprule
Method & MATH & GSM8K & AQuA & GSM-H & MMLU & GPQA & AVG \\
\midrule
\sageFinalRoundVoteRows
\bottomrule
\end{tabular}}
\end{table}

\subsection{Which Roles Serve as Donors}
\label{app:donor-roles}

We examine which roles \method{} selects as donors in the main-study runs
(Table~\ref{tab:main-results}). Each problem samples four of the nine roles at
random, so without any role preference every role would be the donor on a
quarter of the problems where it is present. We exclude problems whose donor
was decided by the agent-index tie-break. Figure~\ref{fig:donor-roles} shows
that donor selection tends to favor roles whose strategy matches the structure
of the task. On free-response math, Qwen2.5-1.5B favors roles that formalize
the problem, such as AlgebraicSolver and DiscreteMath, and rarely selects
MultipleChoiceStrategist, whose elimination strategy has nothing to act on
without answer choices. Where answer choices or domain knowledge matter, as on
AQuA-RAT and MMLU, Ministral-3-3B instead favors MultipleChoiceStrategist and
BroadAcademicKnowledge. Roles are sampled and scored without regard to the
benchmark, so these preferences emerge only from the agents' responses. On
GPQA-Diamond, the hardest benchmark, neither backbone shows a role preference.
Both backbones answer most of its questions incorrectly
(Table~\ref{tab:main-results}), so answer agreement and peer review give a weak
signal about which response is reliable, and the selected donor's role is
effectively random. Where preferences do appear, they are moderate and partly
backbone-dependent, and no role dominates any benchmark. The useful strategy
therefore varies from problem to
problem even within a task, which is consistent with the advantage of
per-problem donor selection over a random donor
(Appendix~\ref{app:random-donor}).

\begin{figure}[H]
\centering
\includegraphics[width=\textwidth]{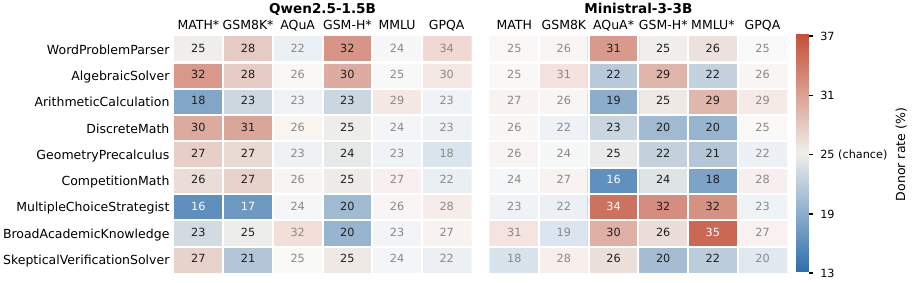}
\caption{\textbf{Donor rate by role.} Percentage of problems in which a role is
selected as donor when it is on the team (chance $=25\%$), pooled over three
runs. Problems decided by the agent-index tie-break are excluded. $^*$ marks
benchmarks with a significant role preference (permutation test, $p<0.05$);
the other columns are faded.}
\label{fig:donor-roles}
\end{figure}